\newtheorem{assumption}{Assumption}[section]
\newtheorem{theorem}{Theorem}[section]
\algnewcommand\algorithmicpsdo{\textbf{Central server do:}}
\algnewcommand\PSDO{\item[\algorithmicpsdo]}
\algnewcommand\algorithmicclientdo{\textbf{Clients do:}}
\algnewcommand\ClientDO{\item[\algorithmicclientdo]}
\algnewcommand\algorithmicblank{\textbf{}}
\algnewcommand\Blank{\item[\algorithmicblank]}
\algnewcommand\algorithmictrainlocally{\textbf{TrainLocally($k$, $w_0$):}}
\algnewcommand\TrainLocally{\item[\algorithmictrainlocally]}
\begin{document}
%
\title{Communication-Efficient Federated Learning with Compensated Overlap-FedAvg}
%
%
%

\author{Yuhao~Zhou$^{\ast}$,~Qing~Ye$^{\ast}$,~and~Jiancheng Lv,~\IEEEmembership{Member,~IEEE}
\thanks{This work is supported in part by the National Key Research and Development Program of China under Contract 2017YFB1002201, in part by the National Natural Science Fund for Distinguished Young Scholar under Grant 61625204, and in part by the State Key Program of the National Science Foundation of China under Grant 61836006.}
\thanks{Yuhao~Zhou~(e-mail:~sooptq@gmail.com), Qing~Ye~(e-mail:~fuyeking@stu.scu.edu.cn), and Jiancheng~Lv~(e-mail:~lvjiancheng@scu.edu.cn) are with the College of Computer Science, Sichuan University, China.}
\thanks{$^{\ast}$ indicates that Yuhao Zhou and Qing Ye contributed equally to this paper.}
}

%
%

\markboth{Journal of \LaTeX\ Class Files,~Vol.~14, No.~8, August~2015}%
{Zhou \MakeLowercase{\textit{et al.}}: Communication-Efficient Federated Learning with Compensated Overlap-FedAvg}
%



\IEEEtitleabstractindextext{%
\begin{abstract}
While petabytes of data are generated each day by a number of independent computing devices, only a few of them can be finally collected and used for deep learning (DL) due to the apprehension of data security and privacy leakage, thus seriously retarding the extension of DL. In such a circumstance, federated learning (FL) was proposed to perform model training by multiple clients' combined data without the dataset sharing within the cluster. Nevertheless, federated learning with periodic model averaging (FedAvg) introduced massive communication overhead as the synchronized data in each iteration is about the same size as the model, and thereby leading to a low communication efficiency. Consequently, variant proposals focusing on the communication rounds reduction and data compression were proposed to decrease the communication overhead of FL. In this paper, we propose Overlap-FedAvg, an innovative framework that loosed the chain-like constraint of federated learning and paralleled the model training phase with the model communication phase (i.e., uploading local models and downloading the global model), so that the latter phase could be totally covered by the former phase. Compared to vanilla FedAvg, Overlap-FedAvg was further developed with a hierarchical computing strategy, a data compensation mechanism, and a nesterov accelerated gradients (NAG) algorithm. In Particular, Overlap-FedAvg is orthogonal to many other compression methods so that they could be applied together to maximize the utilization of the cluster. Besides, the theoretical analysis is provided to prove the convergence of the proposed framework. Extensive experiments conducting on both image classification and natural language processing tasks with multiple models and datasets also demonstrate that the proposed framework substantially reduced the communication overhead and boosted the federated learning process.
\end{abstract}

\begin{IEEEkeywords}
distributed computing, federated learning, overlap, efficient communication.
\end{IEEEkeywords}}

\maketitle



%
\IEEEpeerreviewmaketitle

\section{Introduction}
%
%
%
%
\IEEEPARstart{W}ith the rapid development of deep learning, tons of daily generated data that were previously considered useless can now be utilized to extract latent patterns through deep learning (DL)~\cite{Lecun2015Deep}, and thereby retrieving valuable information. Similarly, owning abundant data resources is a prerequisite for many innovative breakthroughs in terms of academia across different fields, including Natural Language Processing\cite{Brown2020LanguageMA, devlin2018bert} and Computer Vision\cite{krizhevsky2017imagenet, simonyan2014very}. On the other hand, thanks to the popularity of independent computing devices~\cite{edge-computing} (e.g. smartphone, laptop) and edge devices (e.g. router), each individual now can produce more data than ever before, with gigabytes or even terabytes expected to be generated every day. However, these fresh data are tied to different facilities, resulting in data fragmentation (i.e. isolated data islands problem). In other words, since separated facilities like smartphones are mostly equipped with lots of sensors (e.g. camera, microphone, GPS), and common users generally have no idea either what applications are collecting information through these sensors or how this collected information would be used for, these users will take considerable risks when they decide to share this information with others. As a result, while the demand for data and the speed of creating data have both been increased dramatically, it is still extremely difficult to gather these data together, processing, and making the most of them into a collection for learning, especially considering privacy is increasingly valued.

To tackle this problem, federated learning (FL)~\cite{mcmahan2017communication} was proposed to introduce a new promising centralized distributed training method~\cite{Demystifying-DDL} that allows multiple clients (e.g. mobile clients or servers from different enterprises) to coordinately train a deep neural network model on their combined data, without any of the participants having to reveal its local data to the central server or other participants. Specifically, the workflow of the federated learning with FederatedAveraging (i.e., FedAvg~\cite{mcmahan2017communication}), as Figure~\ref{fig:general-fl} illustrated, has 4 steps: I. The central server initializes the global model and pushes the global model to every participating client. II. Clients train the received global model on its local data. After that, they return the essential information $I_t^k$ that is beneficial to the evolving of the global model (i.e., mostly is model weights, but could be gradients) to the server, where $k$ is the index of the client and the $t$ indicates the iteration index, respectively. III. The central server utilizes all clients' information by $\sum_k I_t^k$ to update the global model. IV. Repeat step II and step III until convergence. Particularly, it can be seen that federated learning omits the data collection step, and conversely utilizing $I_t^k$ that is not explicitly related to clients' local data to evolve the global model, which not only integrates fragmented data, but also offers as much privacy as possible.

\begin{figure}[htb]
	\centering
	\includegraphics[width=\linewidth]{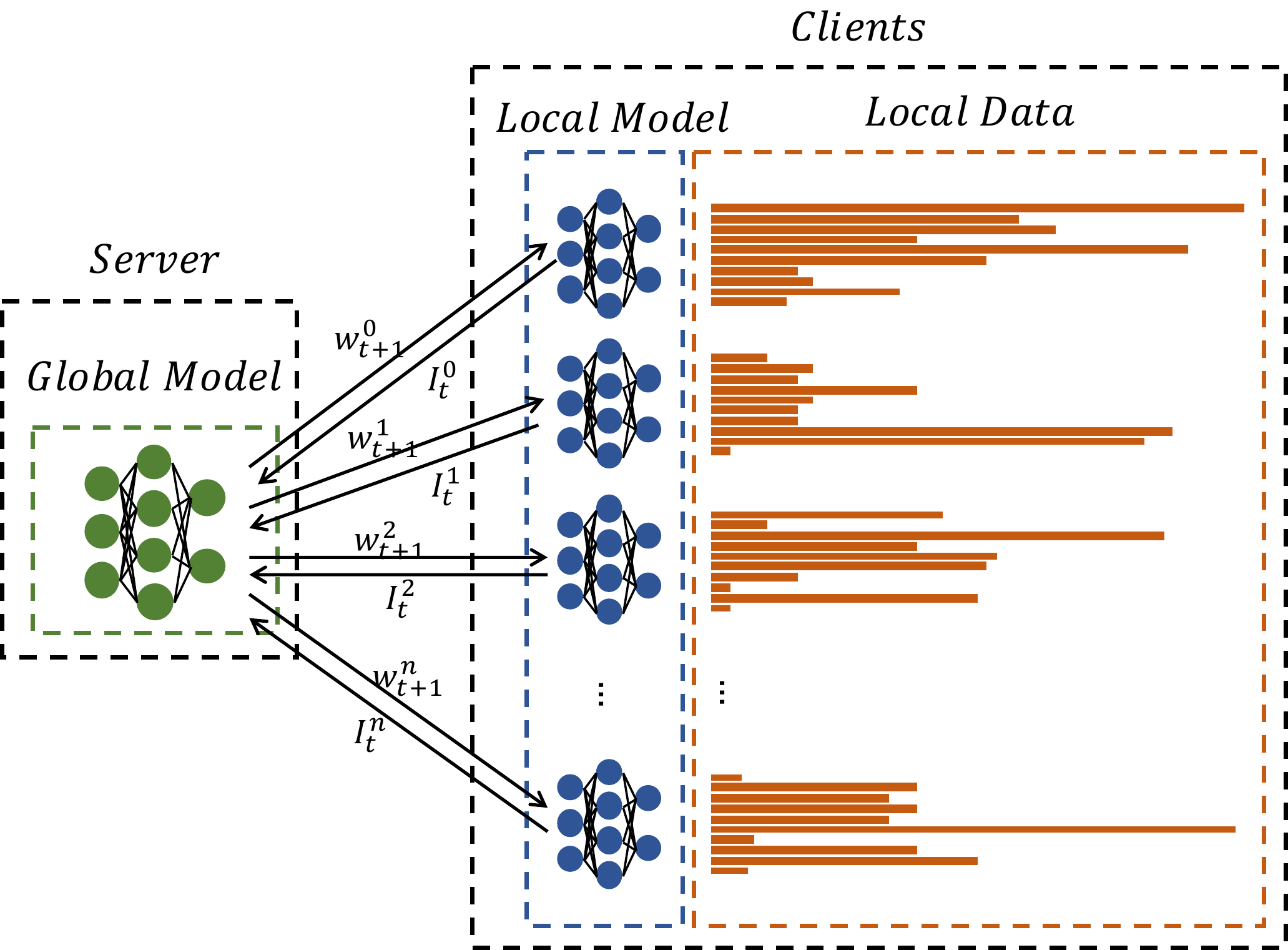}
	\caption{A general federated learning workflow. Each client always keeps its data private and only shares essential information that could help improve the global model with others, be it gradients or model weights.}
	\label{fig:general-fl}
\end{figure}

Nonetheless, preserving data privacy by not revealing each client's local data in federated learning comes at a price, especially considering the fact that deep learning is basically built on top of huge chunks of data. In such a circumstance, the price for federated learning~\cite{2019arXiv191204977K} is low communication efficiency due to overloaded bandwidth since the client-server communication occurs at the end of every training iteration. In detail, referring to the workflow of federated learning as shown in Figure~\ref{fig:general-fl}, we can see that at each iteration, every client has to communicate a $I_t^k$ which commonly has the same size as the model. Given that nowadays the model size can be easily reached hundreds of megabytes~\cite{Brown2020LanguageMA, huang2017densely, xie2017aggregated} or even gigabytes, plus the model in the federated learning generally takes hundreds of thousands of iterations to converge, the overall data transfer size during the training process may exceed petabytes readily\cite{sattler2019sparse}. Hence, in comparison to non-privacy-preserving training, federated learning usually takes much more time to converge due to the time-consuming communication phase, and it may become even slower when the network condition is relatively poorer.

To reduce the communication cost of federated learning, various methods were proposed to explore the possible solutions, which can be mainly divided into two different categories in this paper. Specifically, the first category is the model averaging with a large communication interval. Some work~\cite{mcmahan2017communication, konevcny2016federated} tried to drastically decrease the communication overhead by enlarging the communication interval, yet it also degraded the model's final accuracy and the best communication step size is hard to capture. The second category is data compression. Some work~\cite{seide20141, bernstein2018signsgd, alistarh2017qsgd, lin2017deep, wangni2018gradient} attempted to compress the $I_t^k$ before transmitting it. However, the compressing phase needs to consume a lot of time as well, especially when performing federated learning on battery-sensitive (e.g. smartphone) or low-performance (e.g. netbook, gateway) devices.

In this paper, we propose a new novel framework named Overlap-FedAvg to alleviate this problem, which paralleled the model training phase with the model uploading and downloading phase, so that the latter phase could be totally covered by the former phase. Compared to the vanilla FedAvg, Overlap-FedAvg was firstly further developed with a hierarchical computing strategy to accomplish the parallelism. Nevertheless, this strategy also brought the gradient staleness problem so a gradient compensation mechanism was therefore employed to alleviate this issue and keep the convergence of the proposed Overlap-FedAvg framework. Additionally, a nesterov accelerated gradients (NAG)~\cite{NAG, yang2020federated} algorithm was also established to accelerate the convergence speed of the federated learning. Besides, Overlap-FedAvg is orthogonal to other compression methods so that it could collaborate with others to further improve the efficiency of the federated learning.

The contributions of this proposal are the followings:

\begin{itemize}
	\item To improve the communication efficiency of federated learning, we propose a novel communication-efficient framework called Overlap-FedAvg by paralleling the communication with computation. To the best of our knowledge, it is the first work that focuses on improving the communication efficiency of federated learning by an overlapping approach, and it is totally compatible with many other data compression methods. 
	\item In order to alleviate the gradients staleness problem brought by the parallelism in Overlap-FedAvg and keep the convergence speed of the federated learning, taylor series expansion was utilized as a compensation mechanism. Additionally, the essential theoretical analysis of the proposed Overlap-FedAvg algorithm is provided.
	\item To achieve higher accuracy, NAG was employed to further accelerate the federated learning. Compared with previous similar work, our proposal was not only effective but also intuitive and easy to implement. Note that theoretically, not only the NAG algorithm but all acceleration algorithms (e.g. Adam\cite{kingma2014adam}, RMSProp\cite{Tieleman2012}, etc.) could be applied to the Overlap-FedAvg framework readily.
	\item To investigate the effectiveness of the proposed Overlap-FedAvg, extensive experiments were conducted on four benchmark datasets with 8 DNN models to compare its potentiality with the baseline algorithm FedAvg. The results demonstrated that our proposed Overlap-FedAvg could effectively improve the accuracy of federated learning and decrease the communication cost. The source code and hyper-parameters of all experiments are open-sourced for reproducibility\footnote{\url{https://github.com/Soptq/Overlap-FedAvg}}. In addition, we discuss whether the proposed Overlap-FedAvg would damage the privacy security of federated learning.
\end{itemize}

The rest of this paper is organized as follows. A literature review is illustrated in Section~\ref{section:two}, where some background information and related work are introduced. In Section~\ref{section:three}, the implementations of the proposed Overlap-FedAvg algorithm are presented. Then the essential convergence analysis is described in Section~\ref{section:four}. The experiment design and result analysis are detailedly documented in Section~\ref{section:five}. Finally, the discussion and conclusion of this paper are drawn in Section~\ref{section:six}.

\section{Literature review}
\label{section:two}
In this section, we give a brief introduction of federated learning, and some related work that accelerates the training process of federated learning by improving communication efficiency.

\subsection{Federated Learning}
Federated learning generally describes a distributed framework that allows the cluster to perform model training on all clients' combined datasets without leaking their respective local data to others. This is accomplished by letting every participating client only sends information that could help evolve the global model, instead of its plainly local dataset, to the central server. Federated learning is originated from distributed deep learning~\cite{Large-scale-DDN,Ea-based-NAS, PSO-PS}, but with mainly four additional features. Firstly, the training data of different clients used in federated learning is expected to be heavily unbalanced. Namely, each client's local data tend to be both Non-IID (Independent and Identically Distributed) and unequal in amounts, and will not be unveiled to others during the training process. As a result, any node in the cluster, including the central server, cannot determine the data distribution of any other nodes. Secondly, federated learning has a relatively small training batch size. Virtually, federated learning was initially designed to make use of the data on battery-sensitive or low-performance devices, which usually also have a relatively small memory size. Consequently, the batch size of the model needs to be small enough so that the memory would not run out. Third, the network bandwidth of the clients in federated learning is always considerably small compared to the vanilla distributed training, where gigabytes bandwidth is commonly used. Hence, numerous communications produce massive communication overhead and take much time, resulting in low communication efficiency. Moreover, network connections of clients can be even lost during the training, indicating the system has to be robust enough to handle these exceptions. Forth, clients in federated learning are unreliable, meaning it is very potential that malicious clients send carefully constructed information to the central server to poison the global model, and eventually making the global model unusable. Extensive work was proposed to address the issues aforementioned~\cite{bonawitz2017practical, hardy2017private, abadi2016deep, zhao2018federated, sattler2019robust} while we focus on alleviating the massive communication overhead problem of federated learning in this article.

\begin{algorithm}[h] 
	\caption{\texttt{FederatedAveraging}. In the cluster there are $N$ clients in total, each with a learning rate of $\eta$. The set containing all clients is denoted as $S$, the communication interval is denoted as $E$, and the fraction of clients is denoted as $C$}
	\label{alg:fedavg} 
	\begin{algorithmic}[1] 
	    \PSDO
	        \State Initialization: global model $w_0$.
	        \For {each global iteration $t \in {1, ..., iteration}$}
	            \State \# Determine the number of participated clients.
	            \State $m \leftarrow max(C \cdot N, 1)$
	            \State \# Randomly choose participated clients.
	            \State $S_p = random.choice(S, m)$
	            \ForAllP {each client $k \in S_p$}
	                \State \# Get clients improved model.
	                \State $w_{t+1}^{k} \leftarrow TrainLocally(k, w_t)$
	            \EndForAllP
	            \State \# Update the global model.
	            \State $w_{t+1} \leftarrow \sum_{k=0}^{N} p_{k}w_{t+1}^{k}$
	        \EndFor
    \Blank
	\TrainLocally
	    \For {each client iteration $e \in {1, ..., E}$}
	        \State \# Do local model training.
	        \State $w_{e} \leftarrow w_{e-1} - \eta \nabla F(w_{e-1})$
	    \EndFor
	    \State \Return $w_{E}$
	\end{algorithmic} 
\end{algorithm}

\subsection{Related Work}
Various work had been proposed to address the massive communication overhead problem during federated learning, which can be roughly divided into two groups: \textbf{reducing communication rounds} and \textbf{data compression}

\begin{enumerate}
    \item \textbf{Reducing communication rounds:} In vanilla federated learning, the communication phase happened at the end of every iteration (i.e., the communication interval is $1$). Considering a typical federated DNN training process usually takes hundreds of thousands of iterations, one can easily think of enlarging the communication interval to significantly reduce the communication overhead. As a result, FederatedAveraging (FedAvg)~\cite{mcmahan2017communication} and its variants \cite{konevcny2016federated} were proposed to allow clients do multiple iterations of local training before making a global model update, as Algorithm~\ref{alg:fedavg} illustrated, where $p_k = \frac{n_k}{n}$ is the weight of the $k$-th client. The experiments suggested that FedAvg massively increased the convergence speed with respect to wall-clock time due to the reduction of communication rounds. In addition, several work provided the theoretical analysis of FedAvg, advocating that it is not only convergent on both IID and Non-IID data with \textit{decaying learning rate}, but also has linear speed-up \cite{li2019convergence, qu2020federated}. However, The communication interval in FedAvg is controlled by a hyper-parameter $E$, which is greatly influential to the final accuracy of the model and is shifty based on the characteristic of both the model and the dataset. In fact, the choice of $E$ is a trade-off between the final accuracy of DNN and the training efficiency: the smaller the $E$ is, the better the model's final accuracy generally would be, and conversely the bigger the $E$ is, the faster the model generally would converge. Therefore, an experienced engineer needs to be employed to fine-tune the communication interval $E$ in order to extract the best performance of the model.
    \item \textbf{Data compression:} Apart from decreasing communication rounds, another approach to reduce the communication overhead is to reduce the data transfer size, namely data compression. Following this direction, there are mainly two kinds of methods to effectively compress the data: quantization\cite{seide20141, bernstein2018signsgd, alistarh2017qsgd} and sparsification\cite{lin2017deep, wangni2018gradient}, where the former one aims to represent the original data by a low-precision data type with a smaller size (e.g. \texttt{int8} or \texttt{bool}), and the latter one intentionally only transmits essential values at each communication (about $1\%$ of the total number of values). In detail, quantization with error feedback is experimentally and theoretically effective\cite{karimireddy2019error}, but its compression rate is considerably low as the maximum compression ratio is limited to $\frac{1}{32}$ (the common data type used in deep learning is 32-bit). Moreover, with fewer bits carrying information, quantization tends to converge much slower. On the other hand, sparsification is capable of achieving a compression rate of $\frac{1}{100}$ easily without significantly damaging the model's convergence speed and final accuracy. Nevertheless, sparsification inevitably introduces extra phases during the training process, including sampling, compressing, encoding, decoding, and decompressing, which could potentially affect the overall training efficiency, especially on battery-sensitive (e.g. smartphone) or low-performance (e.g. netbook, gateway) devices.
\end{enumerate}

In this paper, we propose Overlap-FedAvg that attempted to overlap the communication with computation in federated learning to boost the training from a structural perspective. In fact, the idea of communication-computation overlap had been explored extensively in High-Performance Computing (HPC)~\cite{danalis2005transformations, quinn1996utility, marjanovic2010overlapping}. However, they generally assumed the data and the computation is independent of each other so that data can be transmitted as soon as it is available. Nonetheless, in the training process of a DNN, each iteration of training seriously related to the data of the last iteration, leading to a chain-like structure that is hard to split. Consequently, ideas in the HPC domain fail to be directly applied to federated learning. To accelerate the training of federated learning, the Overlap-FedAvg is proposed, which is the first overlapping work in federated learning to our best knowledge. Particularly, the dependency problem was addressed by relaxing the chain-like constraint and a data compensation mechanism, which will be detailedly discussed in the latter section.

Moreover, compared to above-mentioned related work and the corresponding shortcomings, Overlap-FedAvg firstly does not require experts to carefully design $E$. Contrarily, $E$ in the Overlap-FedAvg is automatically determined by the environment (i.e. network condition, bandwidth, etc) with the hierarchical computing strategy. In other words, if the environment permits a more frequent communication, $E$ will be automatically tuned smaller, and vise versa. Secondly, Overlap-FedAvg does not compress any data during the training process. That is, there is no reduced convergence rate or extra time-consuming phases brought by Overlap-FedAvg in terms of compression. Moreover, Overlap-FedAvg is compatible with those compression methods, so that they can be applied together to the training process to further accelerate the training speed.

\section{Methodology}
\label{section:three}

In this section, we firstly show an overview of the proposed Overlap-FedAvg framework, then we present its details as well as implementations, and finally we introduce the gradients compensation mechanism and the application of the NAG algorithm.

\subsection{Overview}

To alleviate the massive communication problem, we propose a novel framework named Overlap-FedAvg, which decoupled the model uploading and downloading phase with the model training phase by employing a separate process dedicating to data transmission on each client. In this way, clients' local model training phases can be freed from the interruptions of the frequent communication. However, while this decoupling made the training phase capable of doing continuous model training without any blocking, it also introduced staleness to the synchronized data, which will be detailedly explained in Section \ref{section:three:gradients-compensation}. Observed that the weights of the model are generally equivalent to the aggregation of the gradients, we decomposed the vanilla FedAvg's model updating rule, abstracting the gradients, utilizing taylor series expansion and fisher information matrix to analyze the gap between the stale gradients and the up-to-date gradients, and thereby properly compensating the stale model weights. Moreover, with the decomposition and the abstraction of the gradients, many parameters optimizing acceleration algorithms can be easily applied to further increase the converge speed of federated learning, for example, momentum accelerated SGD, NAG, or even Adam~\cite{kingma2014adam}.

In general, Overlap-FedAvg's workflow consists of five steps. I. Initiate the global model on the central server and push it to every participating client. II. Each participating client does $E$ iterations of local model training on their respective data. Note that $E$ here is no longer a constant hyper-parameter that needs to be manually setup, but a dynamic variable that adapts to the changing circumstances. III. When the local training is finished, clients instantly continue the next iteration of local training, while commanding another process to push their local model to the central server. IV. When the central server receives clients' improved models, it firstly uses taylor series expansion and fisher information matrix to compensate the model weights, then with the compensated weights, the central server calculates the nesterov momentum and updates the global model. V. The central server pushes the latest global model to the participating clients. For the architecture, the biggest difference between the Overlap-FedAvg and vanilla FedAvg is step III, where we relaxed the chain-like requirements, meaning the central server is not guaranteed for receiving the latest model from each client, which unavoidably brought staleness problem to our setting. Consequently, step IV managed to address the issue of stale model weights.

\begin{figure}[h]
	\centering
	\includegraphics[width=\linewidth]{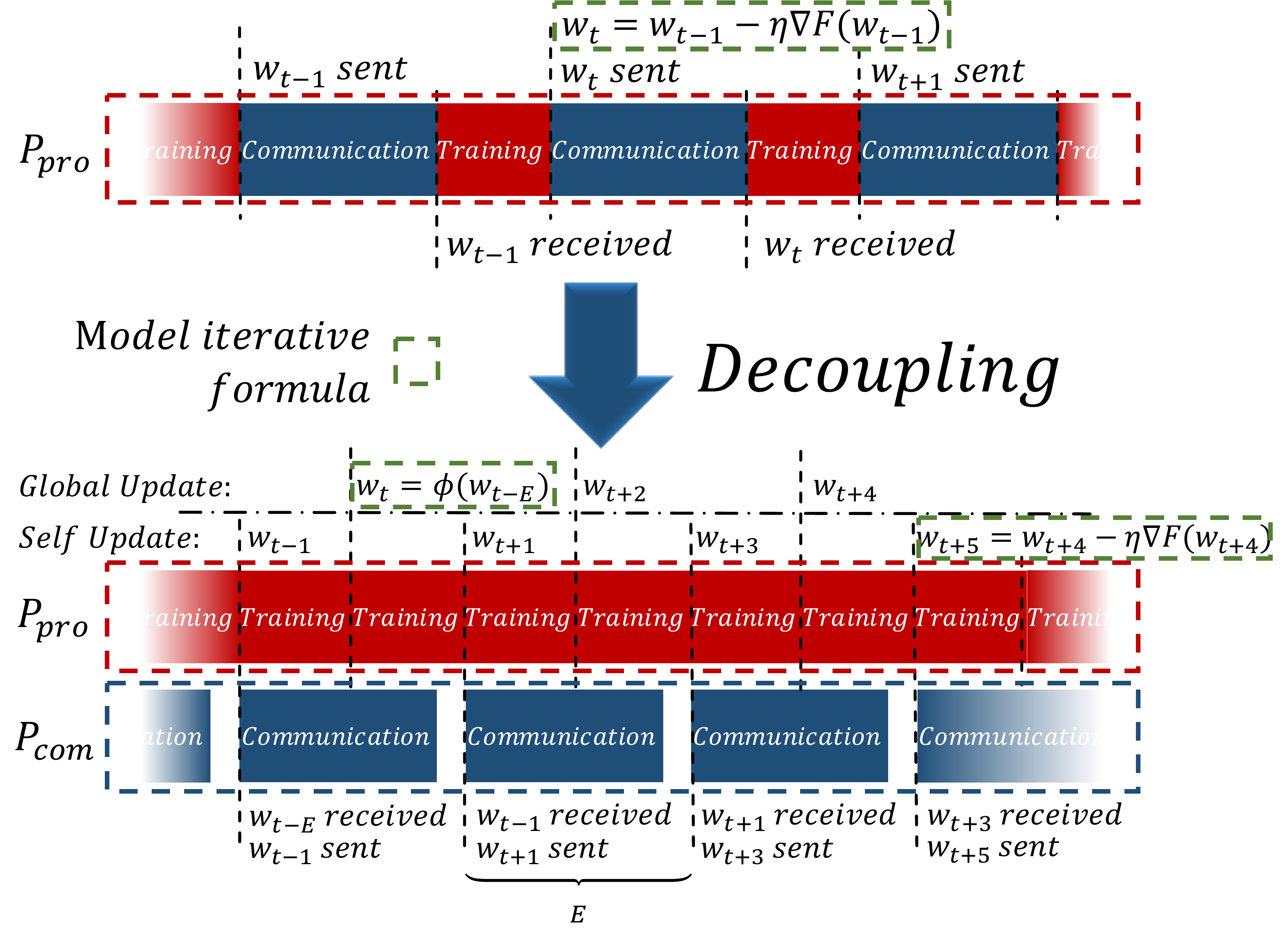}
	\caption{Elementary SGD was used in vanilla FedAvg for model iterating, which in the meantime restricted the training phase from paralleling with the communication phase, as the training phase required the model weights that the communication phase is transmitting. To decouple the training phase and communication phase, Overlap-FedAvg loosed this restriction and employed two model iterative formulas to the training process: \textit{self update} and \textit{global update}, where the former one was used when the required model weights for a global model updating are not received yet, and the latter one was used when the required model weights are successfully received.}
	\label{fig:arch_decoupling}
\end{figure}

\subsection{Overlapping of the Communication}
At the beginning of the federated learning in Overlap-FedAvg, each client will initialize two processes: a process focusing on local model training (denoted as $P_{pro}$) and a process dedicating to communicating (denoted as $P_{com}$). It is worth noting that the central server will also spawn two processes, where its $P_{pro}$ is used to do global model updating, and its $P_{com}$ is used to receive clients' improved models and pushes the latest global model to the clients. Since both feed-forward and back-propagation in model training is heavily depends on the solution of its previous step (i.e. $w_t$ is requisite for $w_{t+1} = w_{t} - \eta \nabla F(w_{t})$), we came to the conclusion that the model training process is strongly consecutive and real-time (i.e. chain-like). As a result, under vanilla FedAvg's setting, the model training at iteration $t+1$ will only begin after successfully receiving $w_t$, namely, after finishing the time-consuming communication phase at iteration $t$. Hence, in order to successfully decouple the model training phase from communicating, the real-time condition of vanilla FedAvg needs to be relaxed.

The diagram of Overlap-FedAvg's model iterating process is presented in Figure \ref{fig:arch_decoupling}. In each iteration, $P_{com}$ will fetch the model before $P_{pro}$ has not even started training (i.e. pre-download the global model), and it will also handle the model uploading task after the local model is improved, so that $P_{pro}$ can instantly continue the next iteration of local model training without caring the matters of communicating. From the diagram, we can see that the communication interval $E$ of Overlap-FedAvg is determined by the communication time and the training time. When $P_{com}$ communicates, $P_{pro}$, like vanilla FedAvg, utilizes elementary SGD and its local training data to update the local model. On the other hand, when $P_{com}$ finished communicating and received the latest model, unlike vanilla FedAvg that simply calculates the weighted aggregation as the new global model, Overlap-FedAvg use the function $\phi(\cdot)$ to update the global model, which will be detailedly discussed in the latter section. The pseudocode of Overlap-FedAvg is illustrated in Algorithm \ref{alg:overlap-fedavg}.

\begin{algorithm}[h] 
	\caption{\texttt{Overlap-FedAvg}.}
	\label{alg:overlap-fedavg} 
	\begin{algorithmic}[1] 
	    \PSDO
	        \State Initialization: global model $w_0$.
	        \For {each global iteration $t \in {1, ..., iteration}$}
	            \ForAllP {each client $k \in {1, ..., N}$ )}
	                \State \# Get clients improved model.
	                \State $TrainLocally(k, w_t)$
	            \EndForAllP
	            \If {$p_{com}~received~models$}
	                \State \# Update the global model.
	                \State $w_{t+1} = \phi(w_{t}, p_0, w_{t-E}^{0}, p_1, w_{t-E}^{1}, ..., p_N, w_{t-E}^{N})$
	            \EndIf
	        \EndFor
    \Blank
	\TrainLocally
	    \While {$p_{com}~is~communicating$}
	        \State \# Do self update
	        \State $w_{e} \leftarrow w_{e-1} - \eta \nabla F(w_{e-1})$
	    \EndWhile
	    \State \# Command $p_{com}$ to send latest model
	    \State $p\_com\_send(w_{E})$
	\end{algorithmic} 
\end{algorithm}

\subsection{Gradients Compensation}
\label{section:three:gradients-compensation}
The overlapping of the communication and training phase in the Overlap-FedAvg is promising because the communication is successfully hidden. However, the communication phase inevitably takes time, and from our observation, it is actually very likely to take much more time than a iteration of local model training. As a result, when the server/client finally received/uploaded the model, this model is probably from several iterations ago. Simply considering weighted aggregation of these stale models like FedAvg will mostly result in a degradation of model performance in Overlap-FedAvg.

\begin{figure}[t]
	\centering
	\includegraphics[width=\linewidth]{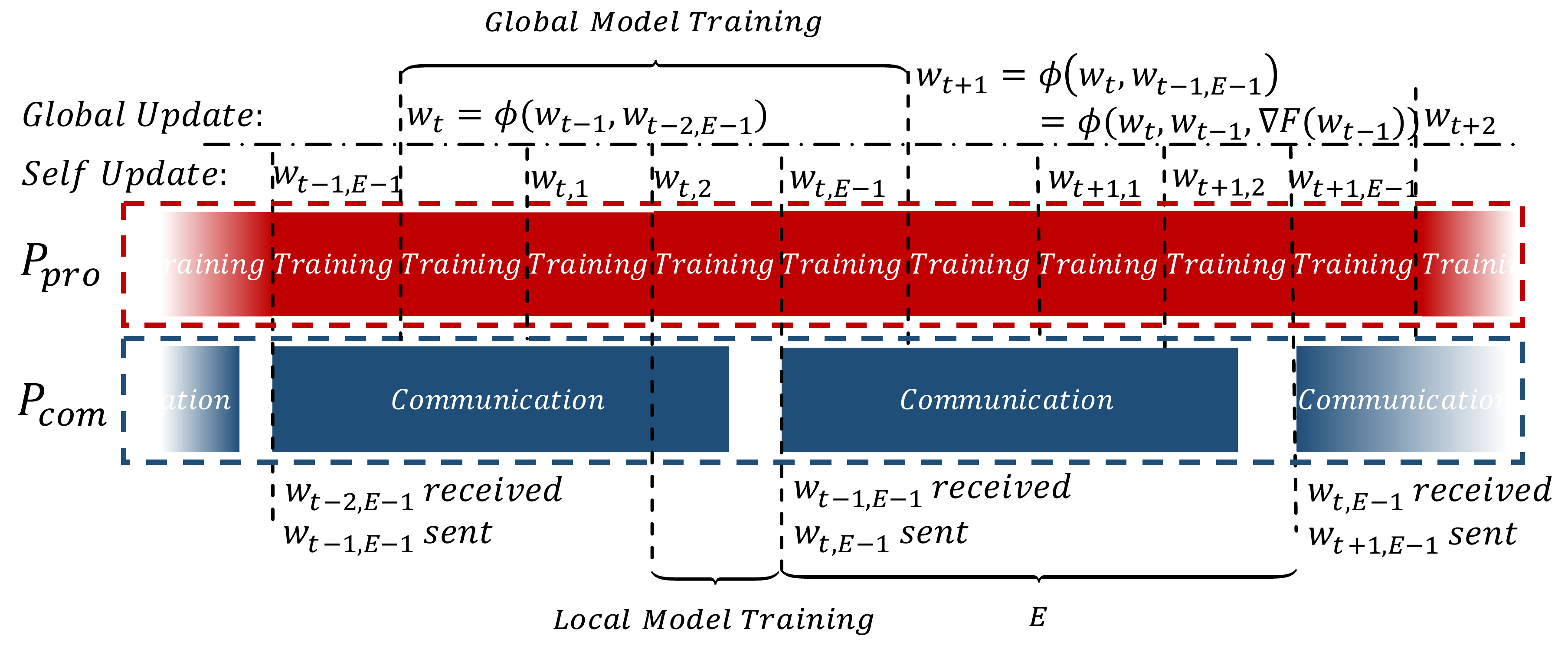}
	\caption{Compared to Figure~\ref{fig:arch_decoupling}, here the timestamp of $w$ will only increase after performing a global update. In other words, from the central server's perspective, $E$ iterations of clients' local model training are considered as one global model training in order to simplify the staleness analysis.}
	\label{fig:unityofE}
\end{figure}

To simplify the analysis of the degradation, we treat clients' $E$ iterations of local model training as one global model training, as Figure~\ref{fig:unityofE} illustrated, where $w_{t, i}^{k}$ is defined as the $k$-th client's model weights at the $i$-th local iteration in the $t$-th global iteration. Consequently, the staleness of the model is limited to $1$ global iteration (i.e. $E$ local iterations). Then, we represent the FedAvg's updating rule in Equality \ref{eq:fedavg-updating} for the latter comparison, since it does not produce stale models at all, and is what Overlap-FedAvg pursues for.

\begin{equation}\label{eq:fedavg-updating}
    w_{t+1} = \underbrace{\sum_{k=0}^{N} p_k w_{t+1}^{k} = w_{t} - \overbrace{\eta \sum_{k=0}^{N} p_k \sum_{e=0}^{E-1} \nabla F(w_{t,e}^{k})}^{w_{t+1}^k = w_{t} - \mu \sum_e \nabla F(w_{t, e}^{k})}}_{Global~Model~Updating},
\end{equation}

However, in Overlap-FedAvg, both $w_{t+1}^k$ and $\nabla F(w_{t,e}^{k})$ are not accessible when the central server updating the global model to $w_{t+1}$ due to the communication-computation overlap. Instead, only $w_{t}^k$ can be retrieved at that specific time. Thus, in Overlap-FedAvg, the updating rule is shown in Equality \ref{eq:ofa-global-update-1}.

\begin{equation}\label{eq:ofa-global-update-1}
w_{t+1} = \phi(w_t, p_0, w_{t}^{0}, p_1, w_{t}^{1}, ..., p_N, w_{t}^{N}).
\end{equation}

where $\phi(\cdot)$ is the core function to produce latest global model weights, and its goal is to generate outcomes that approach results from Equality~\ref{eq:fedavg-updating} at the same timestamp as much as possible. Considering $w_{t-1}$ can be readily obtained from history, and $w_t^k = w_{t-1} - \mu \sum_e \nabla F(w_{t-1,e}^k)$, Equality~\ref{eq:ofa-global-update-1} can be further decomposited into Equality~\ref{eq:ofa-global-update-2}

\begin{equation}\label{eq:ofa-global-update-2}
w_{t+1} = \phi(w_t, ..., p_k, w_{t}^{k}, w_{t-1}^{k}, ..., \sum_e \nabla F(w_{t-1,e}^{k}), ...).
\end{equation}

From the decomposition, we can see that in vanilla FedAvg, $w_{t+1}$ is updated by $w_{t}$ and each client's gradients at $w_{t}^{k}$ (i.e. $\nabla F(w_{t,e}^{k})$), which is expected. However, in Overlap-FedAvg, $w_{t+1}$ is improved by $w_{t}$ and the gradients at $w_{t-1}$, leading to the gradients staleness problem in Overlap-FedAvg. Hence, it is clear that our goal finally turns into estimating $\nabla F(w_{t}^{k})$ by $\nabla F(w_{t-1}^{k})$.

Inspired by the delay compensation in asynchronous distributed deep training~\cite{zheng2017asynchronous}, taylor series expansion and fisher information matrix were utilized to alleviate the gradients staleness problem. Specifically, by taylor series expansion, $\nabla F(w_{t}^{k})$ can be approximately represented by $\nabla F(w_{t-1}^{k})$:

\begin{equation}
\nabla F(w_{t}^{k}) \approx \nabla F(w_{t-1}^{k}) + \mathbb{H}(F(w_{t-1})) \odot (w_{t} - w_{t-1}),
\end{equation}

where $\mathbb{H}(\cdot)$ is the hessian matrix of $\cdot$. However, the hessian matrix is generally very difficult to compute and is against the lightweight characteristic of federated learning. Therefore, a cheap hessian matrix approximator was introduced with fisher information matrix\cite{friedman2001elements}:

\begin{equation}
\epsilon_t \triangleq \mathbb{E}_{y|x, w^*}\left|\left| G(w_t) - \mathbb{H}(F(w_t)) \right|\right| \rightarrow 0, t \rightarrow +\infty,
\end{equation}

where $G(w_t)$ is the outer product matrix of the gradients at $w_t$.

Additionally, in order to control the variance of the approximator, $\lambda$ was imported. Thus, $\nabla F(w_{t}^{k})$ can be cheaply approximated by $\nabla F(w_{t-1}^{k})$ as:

\begin{equation}
\nabla F(w_{t}^{k}) \approx \nabla F(w_{t-1}^{k}) + \lambda \nabla F(w_{t-1}) \odot \nabla F(w_{t-1}) \odot (w_{t} - w_{t-1}).
\end{equation}

Consequently, we had an asymptotically unbiased and cheap approximator to represent the $F(w_{t}^{k})$ from $F(w_{t-1}^{k})$.

\subsection{Application of NAG}

Compared to local model training with pre-collected data or vanilla distributed deep learning, federated learning obtained good privacy, but lost the relatively fast convergence speed. As a result, some work was recently investigating the possibility of utilizing momentum in federated learning\cite{liu2020accelerating, huo2020faster} to accelerate its convergence speed. However, this work was focusing on representing the momentum with model weights $w_t$, which is slightly against the instinct as momentum is essentially describing the trend of gradients. Hence, their algorithms are relatively difficult to understand compared to normal momentum-based acceleration methods (e.g. NAG, Adam\cite{kingma2014adam}) as momentum is superficially unrelated to the model weights.

In Overlap-FedAvg, with the decomposition of the original model updating rule and the abstraction and estimation of the latest gradients from the stale model weights, many parameters optimizing acceleration techniques can be applied to the Overlap-FedAvg intuitively and easily. Particularly, the NAG, one of the most popular approaches to speed-up the training process of the DNN, was utilized to accelerate the Overlap-FedAvg in this proposal as a example. The details and analysis are provided.

In detail, the normal NAG updating rule in SGD is illustrated as follows:

\begin{equation}
\begin{cases}
	v_{t+1} &= \beta v_{t} + \nabla F(w_{t}) + \beta(\nabla F(w_{t}) - \nabla F(w_{t-1})) \\
	w_{t+1} &= w_{t} - \eta v_{t+1}.
\end{cases}
\end{equation}

Since we already had the unbiased estimation of $\nabla F (w_{t})$ with $\nabla F (w_{t-1})$, the NAG updating rule in Overlap-FedAvg could be written as:

\begin{equation}
\begin{cases}
	v_{t+1} &= \beta v_{t} + \nabla F^{ah}(w_{t-1}) \\
	&\qquad + \beta(\lambda \nabla F(w_{t-1}) \odot \nabla F(w_{t-1}) \odot (w_{t} - w_{t-1}))\\
	w_{t+1} &= w_{t} - \eta v_{t+1},
\end{cases}
\end{equation}

where $\nabla F^{ah}(\cdot)$ is denoted as the compensated gradients with ($\cdot$).

\subsection{The overall model updating formula}

Overall, when the central server updated the global model in the Overlap-FedAvg, the following procedures were executed:

\begin{enumerate}
    \item Restoring the gradients of received model weights by $\nabla F(w_{t-1}) = \frac{w_{t-1} - w_{t}}{\eta}$. This step treated several iterations (say $E$ iterations) of clients' local training as one global model training with the same learning rate but $E$ times larger gradient values. This behavior was expected because we want Overlap-FedAvg to have an identical learning speed regardless of how the adaptive $E$ variants, and it matches the chain-like characteristic of SGD. Although the gradient values are scaled up (if $E$ is not equal to $1$), the scaling factor is a fixed constant (namely $E$) and can be extracted with ease, and we argue that it is not a big problem to the latter transformation.
    \item Compensating $\nabla F(w_{t-1})$ to $\nabla F^{ah}(w_{t-1})$ with the formula proposed in Section \ref{section:three:gradients-compensation}.
    \item With the unbiased estimation of the gradients on the current model weights, the NAG algorithm was employed to speed up the federated learning process.
\end{enumerate}

Specifically, we had the updating function $\phi(\cdot)$ shown in Algorithm \ref{alg:overlap-fedavg update}:

\begin{algorithm}[h] 
	\caption{\texttt{$\phi(w_{t}, p_0, w_{t-E}^{0}, p_1, w_{t-E}^{1}, ..., p_N, w_{t-E}^{N})$}.}
	\label{alg:overlap-fedavg update} 
	\begin{algorithmic}[1] 
	    \State \# Restore the gradients of the last global iteration.
	    \For {$k \in 1, ..., N$}
	        \State $\nabla F(w_{t-1}^k) = \frac{w_{t-1}^k - w_{t}}{\eta}$
	    \EndFor
	    \Blank
	    \State \# Calculate the weighted gradients for the central server.
	    \State $\nabla F(w_{t-1}) = \sum_{k=0}^{N} p_k \nabla F(w_{t-1}^k)$
	    \Blank
	    \State \# Compensate the stale gradients
	    \State $\nabla F^{ah}(w_{t-1}) = \nabla F(w_{t-1}) + \lambda \nabla F(w_{t-1}) \odot \nabla F(w_{t-1}) \odot (w_{t} - w_{t-1})$
	    \Blank
	    \State \# Update the momentum and update the globla model
	    \State $v_{t+1} = \beta v_{t} + \nabla F^{ah}(w_{t-1}) + (\nabla F^{ah}(w_{t-1}) - \nabla F(w_{t-1}))$
	    \State $w_{t+1} = w_{t} - \eta v_{t+1}$
	\end{algorithmic} 
\end{algorithm}

\section{Theoretical Analysis}
\label{section:four}
In this section, we provide the theoretical analysis of the proposed Overlap-FedAvg framework. Specifically, the goal of the Overlap-FedAvg is to solve the following generic optimization problem, which is formulated as Equation~\ref{eq:difination1}:

\begin{equation}\label{eq:difination1}
   \min_{w} \left\{ F(w) \triangleq \sum_{k=1}^{N} p_{k}F_{k}(w) \right\},
\end{equation}

where $N$ is the number of devices, $p_{k}$ is the aggregation weight of the $k$-th client satifying $p_{k} \geq 0$ and $\sum_{k=1}^{N}p_{k}=1$, $F_{k}( \cdot )$ refers to clients' local objective function defined by Equation~\ref{eq:difination2}:

\begin{equation}\label{eq:difination2}
F_{k}(w) \triangleq \frac{1}{n_{k}} \sum_{i=1}^{n_{k}} \mathcal{L}(w; x_{k, i}),
\end{equation}

where $x_{k, i}$ refers to the $k$-th client's $i$-th local training data, $n_{k}$ is the size of the $k$-th client's training data and $\mathcal{L}( \cdot )$ is the loss function.

In Overlap-FedAvg, each client will perform $E$ iterations of local training where $E$ is self-adaptive to the environment instead of manually configured in vanilla FedAvg, and then perform a local improved model uploading procedure to the central server for it to update the global model. Therefore, from the perspective of the central server, considering $E$ iterations of clients' local model training as one iteration of central server's global model training (which is explained in 
Section~\ref{section:three:gradients-compensation}), the global model's update rule can be illustrated in Equation~\ref{eq:global_model_update}:

\begin{align}\label{eq:global_model_update}
\begin{split}
\nabla F_{k}^{ah}(w_{t-1}) &= \nabla F_{k}(w_{t-1}) + 
\\ &\qquad\lambda \nabla F_{k}(w_{t-1}) \odot \nabla F_{k}(w_{t-1}) \odot (w_{t} - w_{t-1})
\end{split} \\
w_{t+1} &= w_{t} - \eta_{t} \sum_{k=1}^{N} p_{k} \nabla F_{k}^{ah}(w_{t-1}).
\end{align}

In order to present our theorem, some mild assumptions need to be made.

\begin{assumption}
\label{assump:1}
	For the $k$-th client, where $k \in range(1, N)$, $F_k$ is $L_1$-smooth. That is, for all $x$ and $y$:
	\begin{equation}
	    F_{k}(y) \leq F_{k}(x) + (y - x)^{T} \nabla F_{k}(x) + \frac{L_1}{2}\left|\left| y-x \right|\right|_{2}^{2}.
	\end{equation}
\end{assumption}

\begin{assumption}
\label{assump:2}
	\cite{lian2015asynchronous}\cite{lee2016gradient} For the $k$-th client, where $k \in range(1, N)$, $\nabla F_k$ is $L_2$-smooth. That is, for all $x$ and $y$:
	\begin{equation}
	    \nabla F_{k}(y) \leq \nabla F_{k}(x) + (y - x)^{T} \mathbb{H}(F_{k}(x)) + \frac{L_1}{2}\left|\left| y-x \right|\right|_{2}^{2}.
	\end{equation}
	where $\mathbb{H}(\cdot)$ is the second order derivative.
\end{assumption}

\begin{assumption}
\label{assump:3}
	For the $k$-th client, where $k \in range(1, N)$, $F_k$ is $\mu$-strongly convex. That is, for all $x$ and $y$:
	\begin{equation}
	    F_{k}(y) \geq F_{k}(x) + (y - x)^{T} \nabla F_{k}(x) + \frac{\mu}{2}\left|\left| y-x \right|\right|_{2}^{2}.
	\end{equation}
\end{assumption}

A recent work\cite{zheng2017asynchronous} had demonstrated the convergence rate of delay compensated asynchronous SGD under distributed deep learning's setting, and Theorem \ref{theorem:dc} is derived from its main theorem.

\begin{theorem} \label{theorem:dc}
	Let Assumption \ref{assump:1} and Assumption \ref{assump:2} hold, if the expectation of the $\left|\left| \cdot \right|\right|_{2}^{2}$ norm of the $\nabla F_{k}^{ah}$ is upper bounded by a constant G, and the diagonalization error of $\mathbb{H}(\cdot)$ is upper bounded by $\epsilon_D$, then the difference between the real gradients and approximated gradients at $w_t$ is
	\begin{equation}
	    \left|\left| \nabla F_{k}(w_{t}) - \nabla F_{k}^{ah}(w_{t-1}) \right|\right| \leq G \eta_t (\frac{L_1 \eta_t}{2} + C_\lambda + \epsilon_t),
	\end{equation}
	where $C_\lambda=(1-\lambda)L_{1}^2 + \epsilon_D$ , $\epsilon_t \triangleq \mathbb{E}_{(y|x,w^*)}\left|\left| \nabla F_{k}(w_t) - \mathbb{H}(F_{k}(w_t)) \right|\right|$ \cite{friedman2001elements} and $\eta_t$ is the learning rate at iteration $t$.
\end{theorem}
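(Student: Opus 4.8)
The plan is to begin from the definition of the compensated gradient together with a first-order expansion of $\nabla F_k(w_t)$ about the previous iterate $w_{t-1}$, reducing the quantity of interest to a ``matrix times displacement'' term plus a higher-order remainder. Writing $G_k \triangleq \mathrm{diag}(\nabla F_k(w_{t-1}) \odot \nabla F_k(w_{t-1}))$ for the diagonal Fisher surrogate of the Hessian, Assumption~\ref{assump:2} supplies $\nabla F_k(w_t) = \nabla F_k(w_{t-1}) + \mathbb{H}(F_k(w_{t-1}))(w_t - w_{t-1}) + R_t$ with remainder controlled by $\frac{L_1}{2}\|w_t - w_{t-1}\|^2$. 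Subtracting the defining identity $\nabla F_k^{ah}(w_{t-1}) = \nabla F_k(w_{t-1}) + \lambda G_k (w_t - w_{t-1})$ cancels the two copies of $\nabla F_k(w_{t-1})$ and leaves
\[
\nabla F_k(w_t) - \nabla F_k^{ah}(w_{t-1}) = \left( \mathbb{H}(F_k(w_{t-1})) - \lambda G_k \right)(w_t - w_{t-1}) + R_t .
\]
Everything then hinges on controlling the operator norm of $\mathbb{H}(F_k(w_{t-1})) - \lambda G_k$.

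That decomposition is the step I expect to be the main obstacle, because three distinct error sources have to be separated cleanly and matched to the three constants in the statement. I would split $\mathbb{H} - \lambda G_k = (\mathbb{H} - \mathrm{diag}(\mathbb{H})) + (\mathrm{diag}(\mathbb{H}) - G_k) + (1-\lambda)G_k$. The first bracket is the diagonalization error, bounded by $\epsilon_D$ by hypothesis; the second is precisely the Fisher-information gap, whose expected norm is the asymptotically vanishing term $\epsilon_t$; and the third is the variance-control residual, bounded by $(1-\lambda)\|G_k\| \le (1-\lambda)L_1^2$ once the gradient magnitude is controlled through the smoothness constant. Summing the three pieces yields $\|\mathbb{H} - \lambda G_k\| \le (1-\lambda)L_1^2 + \epsilon_D + \epsilon_t = C_\lambda + \epsilon_t$. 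The delicate part is keeping the full-matrix and diagonal objects consistent so that the Fisher gap collapses to the single scalar $\epsilon_t$ with no leftover cross terms; this is exactly where the adaptation of the delay-compensation analysis of Zheng et al.~\cite{zheng2017asynchronous} carries the weight.

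With the matrix norm in hand, the triangle inequality gives
\[
\left\| \nabla F_k(w_t) - \nabla F_k^{ah}(w_{t-1}) \right\| \le \left( C_\lambda + \epsilon_t \right)\|w_t - w_{t-1}\| + \frac{L_1}{2}\|w_t - w_{t-1}\|^2 ,
\]
and it remains to substitute the one-step displacement. From the update rule $w_t - w_{t-1} = -\eta_t \sum_k p_k \nabla F_k^{ah}$ together with $\sum_k p_k = 1$, Jensen's inequality and the assumed bound $\mathbb{E}\|\nabla F_k^{ah}\|_2^2 \le G$ give $\mathbb{E}\|w_t - w_{t-1}\|^2 \le \eta_t^2 G$. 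Feeding this into the quadratic remainder produces $G\eta_t \cdot \frac{L_1 \eta_t}{2}$ directly, while the linear term is bounded by $(C_\lambda + \epsilon_t)\eta_t\sqrt{G} \le (C_\lambda + \epsilon_t)\eta_t G$ (using $\sqrt{G} \le G$), so that $G\eta_t$ factors out uniformly and the stated bound $G\eta_t\left(\frac{L_1\eta_t}{2} + C_\lambda + \epsilon_t\right)$ follows. The only remaining care is bookkeeping: aligning the learning-rate index with the displacement step and applying the gradient-norm bound consistently across the two terms.
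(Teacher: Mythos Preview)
Your proposal is correct and mirrors the paper's proof almost exactly: the paper likewise splits the error into a Taylor-remainder term $A=\|\nabla F_k(w_t)-\nabla F_k^{h}(w_{t-1})\|$ bounded via Assumption~\ref{assump:2} and a matrix term $B=\|\nabla F_k^{h}(w_{t-1})-\nabla F_k^{ah}(w_{t-1})\|$, then decomposes $\lambda G_k-\mathbb{H}$ into the same three pieces (variance residual, Fisher gap, diagonalization error) before inserting $\|w_t-w_{t-1}\|\le G\eta_t$. Your explicit flag of the $\sqrt{G}\le G$ step is a point the paper glosses over, but otherwise the arguments are the same.
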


\begin{proof}
\begingroup
\allowdisplaybreaks
\begin{align}\label{eq:dc_theorem}
&\qquad \left|\left|\nabla F_{k}(w_{t}) - \nabla F_{k}^{ah}(w_{t-1}) \right|\right| \nonumber \\ 
&= \underbrace{
\left|\left| \nabla F_{k}(w_t) - \nabla F_{k}^{h}(w_{t-1}) \right|\right|}_{A} + \underbrace{
\left|\left| \nabla F_{k}^{h}(w_{t-1}) - \nabla F_{k}^{ah}(w_{t-1}) \right|\right|}_{B},
\end{align}
\endgroup

where $\nabla F_{k}^{h}(\cdot)$ is denoted as the first order approximation of $\nabla F_{k}(\cdot)$.

For the term $A$, as $\nabla F_k$ is $L_2$-smooth, we have Inequality~\ref{eq:dc_1term} as:
\begingroup
\allowdisplaybreaks
\begin{align}\label{eq:dc_1term}
	A &= \left|\left| \nabla F_{k}(w_t) - \nabla F_{k}^{h}(w_{t-1}) \right|\right| \nonumber \\ 
	&\leq \frac{L_2}{2}\left| \left| w_{t} - w_{t-1} \right|\right|^2 \leq \frac{L_1 G}{2}\eta_{t}^{2}.
\end{align}
\endgroup

For the term $B$, we have Inequality~\ref{eq:dc_2term} as:
\begingroup
\allowdisplaybreaks[1]
\begin{align}
	&B = \left|\left| \nabla_{k}^{h}(w_{t-1}) - F_{k}^{ah}(w_{t-1}) \right|\right| \nonumber \\
	&\leq \left|\left| (\lambda \nabla F_k(w_{t-1}) \odot \nabla F_k(w_{t-1}) - \mathbb{H}(F(w_{t-1})))(w_{t} - w_{t-1}) \right|\right| \nonumber \\
	&\leq (\left|\left| \lambda \nabla F_k(w_{t-1}) \odot \nabla F_k(w_{t-1}) - \nabla F_k(w_{t-1}) \odot \nabla F_k(w_{t-1}) \right|\right| \nonumber \\
	&\qquad + \left|\left| \nabla F_k(w_{t-1}) \odot \nabla F_k(w_{t-1}) - Diag(\mathbb{H}(F(w_{t-1}))) \right|\right| \nonumber \\
	&\qquad + \left|\left| Diag(\mathbb{H}(F(w_{t-1}))) - \mathbb{H}(F(w_{t-1}))\right|\right|)G \eta_t \nonumber \\
	&\leq (C_\lambda + \epsilon_t) G \eta_t \label{eq:dc_2term},
\end{align}
\endgroup

where $C_\lambda=(1-\lambda)L_{1}^2 + \epsilon_D$. Taking Inequality \ref{eq:dc_1term} and Inequality \ref{eq:dc_2term} into Equation \ref{eq:dc_theorem}, we have Inequality~\ref{eq:dc_bound}:
\begingroup
\allowdisplaybreaks
\begin{align}\label{eq:dc_bound}
	\left|\left|\nabla F_{k}(w_{t}) - \nabla F_{k}^{ah}(w_{t-1}) \right|\right| \leq G \eta_t (\frac{L_1 \eta_t}{2} + C_\lambda + \epsilon_t),
\end{align}
\endgroup
where $C_\lambda=(1-\lambda)L_{1}^2 + \epsilon_D$.
\end{proof}

Recently another work\cite{li2019convergence} shared a theoretical analysis of the vanilla FedAvg algorithm on the Non-IID and IID dataset and gave a necessary condition for the vanilla FedAvg on Non-IID dataset to converge: $\eta$ must decay. As Theorem \ref{theorem:dc} shown, with $\eta \rightarrow 0$, $\left|\left|\nabla F_{k}(w_{t}) - \nabla F_{k}^{ah}(w_{t-1}) \right|\right| \rightarrow 0$ as well. Hence, intuitively Overlap-FedAvg converges as long as FedAvg converge. We will prove this intuition formally.

Another theoretical analysis of federated learning~\cite{li2019convergence} defined $v_{t+1}^{k} = w_t^{k} - \eta_t \nabla F_{k}(w_{t}^{k}, \xi_{t}^{k})$, $g_t=\sum_{k=1}^{N} p_k \nabla F_k(w_t^k, \xi_{t}^{k})$, $\overline{v}_{t} = \sum_{k=1}^{N}p_k v_{t}^{k}$, $\overline{w}_t = \sum_{k=1}^{N}p_k w_{t}^{k}$ and $\overline{g}_{t} = \sum_{k=1}^{N} p_k \nabla F_{k}(w_{t}^{k})$ to help abstract the problem, where $\xi_{t}^{k}$ is a sample uniformly chosen from the $k$-th client's local training dataset. We will inherit these notations to our analysis. Furthermore, we denote the compensated $g_t$ as $g_t^{ah}$ and compensated $\overline{g}_t$ as $\overline{g}_t^{ah}$. Thus, $\left|\left| g_t^{ah} - g_t \right|\right| \leq \epsilon_c$

The main theorem of the literature~\cite{li2019convergence} is heavily based on its Key Lemma $1-3$. Among them the Key Lemma $1$ is the most important, and it is also the only lemma that will be interfered by the introduction of $\epsilon_c$. For the simplicity, we will denote $G \eta_t (\frac{L_1 \eta_t}{2} + C_\lambda + \epsilon_t)$ as $\epsilon_{c}$. Consequently, we will only give a revised version of its Key Lemma $1$ under Overlap-FedAvg settings.

\begin{theorem}
\label{theorem:lemma1}
	Let Assumption \ref{assump:1} to \ref{assump:3} hold. If $\eta_{t} \leq \frac{1}{4L_{1}}$, we have:
	\begingroup
	\allowdisplaybreaks
	\begin{align}
		&\mathbb{E}\left|\left| \overline{v}_{t+1} - w^{*} \right|\right|^2 \leq (1 - \eta_{t}\mu) \mathbb{E} \left|\left| \overline{w}_{t} - w^{*} \right|\right|^2 \nonumber \\
		&\qquad + \eta_{t}^{2} \mathbb{E} \left|\left| g_{t} - \overline{g}_{t} \right|\right|^2 + 6L_{1}\eta_{t}^{2} \Gamma + 2 \mathbb{E} \sum_{k=1}^{N} p_{k} \left|\left| \overline{w}_{t} - w_{t}^{k} \right|\right|^2 \nonumber \\
		&\qquad + 2\eta_{t}^{2}\epsilon_c G + \eta_t^{2} \epsilon_{c}^{2} + 2 \eta_{t} \epsilon_c \left|\left| \overline{w}_{t} - w^{*} \right|\right|,
	\end{align}
	\endgroup
	where $\Gamma = F^{*} - \sum_{k=1}^{N} p_{k} F_{k}^{*} \geq 0$ and $\epsilon_{c} = G \eta_t (\frac{L_1 \eta_t}{2} + C_\lambda + \epsilon_t)$
\end{theorem}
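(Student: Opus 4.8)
The plan is to regard the Overlap-FedAvg global step as a small perturbation of an exact FedAvg step, and to import Key Lemma~1 of \cite{li2019convergence} essentially verbatim for the unperturbed part. Since the Overlap-FedAvg update is driven by the compensated aggregated gradient, the relevant virtual iterate is $\overline{v}_{t+1} = \overline{w}_t - \eta_t g_t^{ah}$, which I would compare against the plain-FedAvg virtual iterate $\widetilde{v}_{t+1} = \overline{w}_t - \eta_t g_t$ analyzed in \cite{li2019convergence}. The two are related by $\overline{v}_{t+1} = \widetilde{v}_{t+1} - \eta_t (g_t^{ah} - g_t)$, and the perturbation is controlled by $\left\|g_t^{ah} - g_t\right\| \le \epsilon_c$; this aggregate bound follows from Theorem~\ref{theorem:dc} by summing the per-client estimate against the weights $p_k$ and using $\sum_{k} p_k = 1$.

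I would then expand the squared distance to the optimum as
\begin{equation}
\left\|\overline{v}_{t+1}-w^*\right\|^2 = \left\|\widetilde{v}_{t+1}-w^*\right\|^2 - 2\eta_t\big\langle \widetilde{v}_{t+1}-w^*,\, g_t^{ah}-g_t\big\rangle + \eta_t^2\left\|g_t^{ah}-g_t\right\|^2 .
\end{equation}
The first summand is exactly the quantity bounded by the original Key Lemma~1, which is left untouched by the compensation; invoking it reproduces the four leading terms $(1-\eta_t\mu)\mathbb{E}\left\|\overline{w}_t-w^*\right\|^2$, $\eta_t^2\mathbb{E}\left\|g_t-\overline{g}_t\right\|^2$, $6L_1\eta_t^2\Gamma$, and $2\mathbb{E}\sum_{k=1}^{N} p_k\left\|\overline{w}_t-w_t^k\right\|^2$. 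The last summand is controlled directly by $\eta_t^2\left\|g_t^{ah}-g_t\right\|^2 \le \eta_t^2\epsilon_c^2$, producing the $\eta_t^2\epsilon_c^2$ correction.

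The crux is the cross term. Writing $\widetilde{v}_{t+1}-w^* = (\overline{w}_t-w^*) - \eta_t g_t$, splitting the inner product, and applying Cauchy--Schwarz to each part gives
\begin{equation}
-2\eta_t\big\langle \widetilde{v}_{t+1}-w^*,\, g_t^{ah}-g_t\big\rangle \le 2\eta_t\left\|g_t^{ah}-g_t\right\|\left\|\overline{w}_t-w^*\right\| + 2\eta_t^2\left\|g_t\right\|\left\|g_t^{ah}-g_t\right\| .
\end{equation}
Substituting $\left\|g_t^{ah}-g_t\right\| \le \epsilon_c$ into the first summand yields $2\eta_t\epsilon_c\left\|\overline{w}_t-w^*\right\|$, while combining $\left\|g_t^{ah}-g_t\right\|\le\epsilon_c$ with the bounded-gradient hypothesis $\left\|g_t\right\| \le G$ in the second summand yields $2\eta_t^2\epsilon_c G$; these are precisely the two remaining corrections. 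Taking total expectation and collecting the pieces then delivers the claimed inequality.

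I expect the cross term to be the main obstacle. One must track the sign of the inner product so that both corrections enter additively; one must reconcile the fact that the constant $G$, introduced in Theorem~\ref{theorem:dc} as a bound on the \emph{squared} norm of the compensated gradient, is reused here in a \emph{linear} role (so strictly the bounded-gradient assumption should be invoked as $\mathbb{E}\left\|g_t\right\| \le G$, or $G$ reinterpreted accordingly); and one must order the expectation after the Cauchy--Schwarz step so that the stochasticity of $g_t$ carried inside $\widetilde{v}_{t+1}$ is handled consistently with the conditioning on past iterates used in \cite{li2019convergence}. A secondary point to verify is that the aggregation from the per-client bound of Theorem~\ref{theorem:dc} to the aggregate bound $\left\|g_t^{ah}-g_t\right\|\le\epsilon_c$ remains legitimate for stochastic (mini-batch) gradients and not only for full gradients.
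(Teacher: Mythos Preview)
Your proposal is correct and reaches the stated bound, but by a different decomposition than the paper's. The paper does \emph{not} treat the uncompensated FedAvg step as a black box; instead it re-runs the derivation of Key Lemma~1 with the compensated quantities in place, writing
\[
\|\overline v_{t+1}-w^*\|^2=\|\overline w_t-w^*-\eta_t\overline g_t^{ah}\|^2+2\eta_t\langle\overline w_t-w^*-\eta_t\overline g_t^{ah},\,\overline g_t^{ah}-g_t^{ah}\rangle+\eta_t^2\|g_t^{ah}-\overline g_t^{ah}\|^2,
\]
and then expanding the first square as $\|\overline w_t-w^*\|^2-2\eta_t\langle\overline w_t-w^*,\overline g_t^{ah}\rangle+\eta_t^2\|\overline g_t^{ah}\|^2$. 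The $\epsilon_c$ corrections emerge \emph{inside} these internal pieces: the $2\eta_t^2\epsilon_c G+\eta_t^2\epsilon_c^2$ terms come from bounding $\eta_t^2\|\overline g_t^{ah}\|^2\le\eta_t^2\sum_k p_k\|\nabla F_k(w_t^k)+\epsilon_c\|^2$ and expanding the square, and the $2\eta_t\epsilon_c\|\overline w_t-w^*\|$ term comes from the inner product $-2\eta_t\langle\overline w_t-w^*,\overline g_t^{ah}\rangle$ after peeling off the $\epsilon_c$ perturbation. Your route---perturbing around the exact FedAvg iterate $\widetilde v_{t+1}$ and invoking the original lemma wholesale---is more modular and spares you the convexity/smoothness arithmetic; the paper's route shows more transparently which internal step of the Li et al.\ proof each correction attaches to. The caveats you flag (the reuse of $G$ as a linear rather than squared bound, and whether the per-client $\epsilon_c$ estimate transfers to the stochastic aggregate) are real but are equally present in the paper's own argument, so they are not artifacts of your decomposition.
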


\begin{proof}
Because $\overline{v}_{t+1} = \overline{w}_t - \eta_t g_{t}^{ah}$, we have Equation~\ref{eq:conv-define} as:

\begingroup
\allowdisplaybreaks
\begin{align}\label{eq:conv-define}
	&\left|\left| \overline{v}_{t+1} - w^* \right|\right| = \left|\left| \overline{w}_t - \eta_{t}g_{t}^{ah} - w^* - \eta_{t}\overline{g}_{t}^{ah} + \eta_{t} \overline{g}_t^{ah} \right|\right|^2 \nonumber \\
	&= \underbrace{\left|\left| \overline{w}_t - w^* - \eta_{t}\overline{g}_{t}^{ah} \right|\right|^2}_{C} + \underbrace{2\eta_{t} \left< \overline{w}_t - w^* - \eta_{t}\overline{g}_{t}^{ah}, \overline{g}_{t}^{ah} - g_{t}^{ah} \right>}_{D} \nonumber \\
	&\qquad + \eta_{t}^2 \left|\left| g_{t}^{ah} - \overline{g}_{t}^{ah} \right|\right|,
\end{align}
\endgroup

where $w^*$ is the model weights we are pursing. For the second term $D$, because of $\mathbb{E}g_t^{ah} = \overline{g}_{t}^{ah}$, the expectation of $D$ is equal to $0$. For the first term $C$, after splitting we will have Equation~\ref{eq:lemma1}:

\begingroup
\allowdisplaybreaks
\begin{align} \label{eq:lemma1}
	&C = \left|\left| \overline{w}_t - w^* - \eta_{t}\overline{g}_{t}^{ah} \right|\right|^2 \nonumber \\
	&= \left|\left| \overline{w}_t - w^* \right|\right|^2 \underbrace{- 2\eta_{t} \left< \overline{w}_t - w^*, \overline{g}_t^{ah} \right>}_{E} + \underbrace{\eta_t^2 \left|\left| \overline{g}_{t}^{ah} \right|\right|^2}_{F}.
\end{align}
\endgroup

From Assumption \ref{assump:1}, it can be derived that $\left|\left| \nabla F_{k}(w_{t}^{k}) \right|\right| \leq 2L_1(F_{k}{w_{t}^{k}} - F_{k}^{*})$. As a result, for the term $F$, we have Inequality \ref{eq:F-1}:
\begingroup
\allowdisplaybreaks
\begin{align}\label{eq:F-1}
	F &= \eta_t^2 \left|\left| \overline{g}_{t}^{ah} \right|\right|^2 \leq \eta_{t}^{2} \sum_{k=1}^{N} p_{k} \left|\left| \nabla F_{k}^{ah}(w_{t}^{k}) \right|\right|^2 \nonumber \\
	&\leq \eta_{t}^{2} \sum_{k=1}^{N} p_{k} \left|\left| \nabla F_{k}(w_{t}^{k}) + \epsilon_c \right|\right|^2 \nonumber \\
	&\leq \eta_{t}^{2} \sum_{k=1}^{N} p_{k} (\left|\left| \nabla F_{k}(w_{t}^{k})\right|\right|^2 + 2\left< \nabla F_{k}(w_{t}^{k}), \epsilon_c \right> + \left|\left| \epsilon_c \right|\right|^2) \nonumber \\
	&\leq 2L_1 \eta_{t}^{2}\sum_{k=1}^{N} p_{k}(F_k(w_{t}^{k}) - F_{k}^{*}) + 2\eta_{t}^{2}\epsilon_c \sum_{k=1}^{N} p_{k} \nabla F_{k}(w_{t}^{k}) + \eta_t^{2} \epsilon_{c}^{2} \nonumber \\
	&\leq 2L_1 \eta_{t}^{2}\sum_{k=1}^{N} p_{k}(F_k(w_{t}^{k}) - F_{k}^{*}) + 2\eta_{t}^{2}\epsilon_c G + \eta_t^{2} \epsilon_{c}^{2}.
\end{align}
\endgroup

And thus $F$ is successfully bounded. Next we target at bounding the second term $E$, which is also not too difficult as we only need to bring the disturbance $\epsilon_c$ to the equality. Formally, for the second term $E$, we have Inequality~\ref{eq:E-1}:
\begingroup
\allowdisplaybreaks
\begin{align}\label{eq:E-1}
	E &= -2\eta_{t} \left< \overline{w}_t - w^*, \overline{g}_t^{ah} \right> = -2 \eta_{t} \sum_{k=1}^{N} p_{k} \left< \overline{w}_t - w^{*}, \nabla F_{k} (w_{t}^{k}) \right> \nonumber \\
	&\leq -2 \eta_{t} \sum_{k=1}^{N} p_{k} \left< \overline{w}_{t} - w_{t}^{k}, \nabla F_{k}(w_{t}^{k}) \right> + 2 \eta_{t} \sum_{k=1}^{N} p_{k} \left< w_{t}^{k} - w^{*}, \epsilon_c \right> \nonumber \\
	&\qquad + 2 \eta_{t} \sum_{k=1}^{N} p_{k} \left< w_{t}^{k} - w^{*}, \nabla F_{k}(w_{t}^{k}) \right> \nonumber \\
	&\leq -2 \eta_{t} \sum_{k=1}^{N} p_{k} \left< \overline{w}_{t} - w_{t}^{k}, \nabla F_{k}(w_{t}^{k}) \right> + 2 \eta_{t} \epsilon_c \left|\left| \overline{w}_{t} - w^{*} \right|\right| \nonumber \\
	&\qquad - 2 \eta_{t} \sum_{k=1}^{N} p_{k} \left< w_{t}^{k} - w^{*}, \nabla F_{k}(w_{t}^{k}) \right>.
\end{align}
\endgroup

Here we want to eliminate all $\nabla F_k(\cdot)$. By Cauchy-Schwarz inequality, AM-GM inequality and Assumption \ref{assump:3}, the inequality can be formalized as Inequality~\ref{eq:E-2}:

\begin{align}\label{eq:E-2}
\begin{split}
	E &= -2\eta_{t} \left< \overline{w}_t - w^*, \overline{g}_t^{ah} \right> \\ 
	& \leq \eta_{t} \sum_{k=1}^{N} p_{k} \left( \frac{1}{\eta_{t}} \left|\left| \overline{w}_{t} - w_{t}^{k} \right|\right|^2 + \eta_{t} \left|\left| \nabla F_{k}(w_{t}^{k}) \right|\right|^2 \right) \\
	&\qquad - 2 \eta_{t} \sum_{k=1}^{N} p_{k} \left( F_{k}(w_{t}^{k}) - F_{k}(w^{*}) + \frac{\mu}{2} \left|\left| w_{t}^{k} - w^{*} \right|\right|^2 \right) \\
	&\qquad + 2 \eta_{t} \epsilon_c \left|\left| \overline{w}_{t} - w^{*} \right|\right|.
\end{split}
\end{align}

Putting these terms back to $C$ and following the induction in \cite{li2019convergence}, we can derive the revised version of the Key Lemma $1$, which is Theorem \ref{theorem:lemma1} in this paper.

\end{proof}

\begin{table}[tb]
  \centering
  \captionsetup{justification=centering}
  \caption{\\\textsc{The basic configuration of the transformer model}}
    \begin{tabular}{cccccc}
    \toprule
    \toprule
    \multirow{2}[1]{*}{Params}&\# of word&\# of& \# of Hidden &\# of & Dropout \\
          & Embeddings & Head & Units & Layers & Rate\\
    \midrule
    Value & 200   & 2     & 200   & 2     & 0.2  \\
    \bottomrule
    \bottomrule
    \end{tabular}%
  \label{tab:transformer-arch}%
\end{table}%

\begin{figure}[htb]
    \centering
    \subfloat[]{%
        \includegraphics[width=\linewidth]{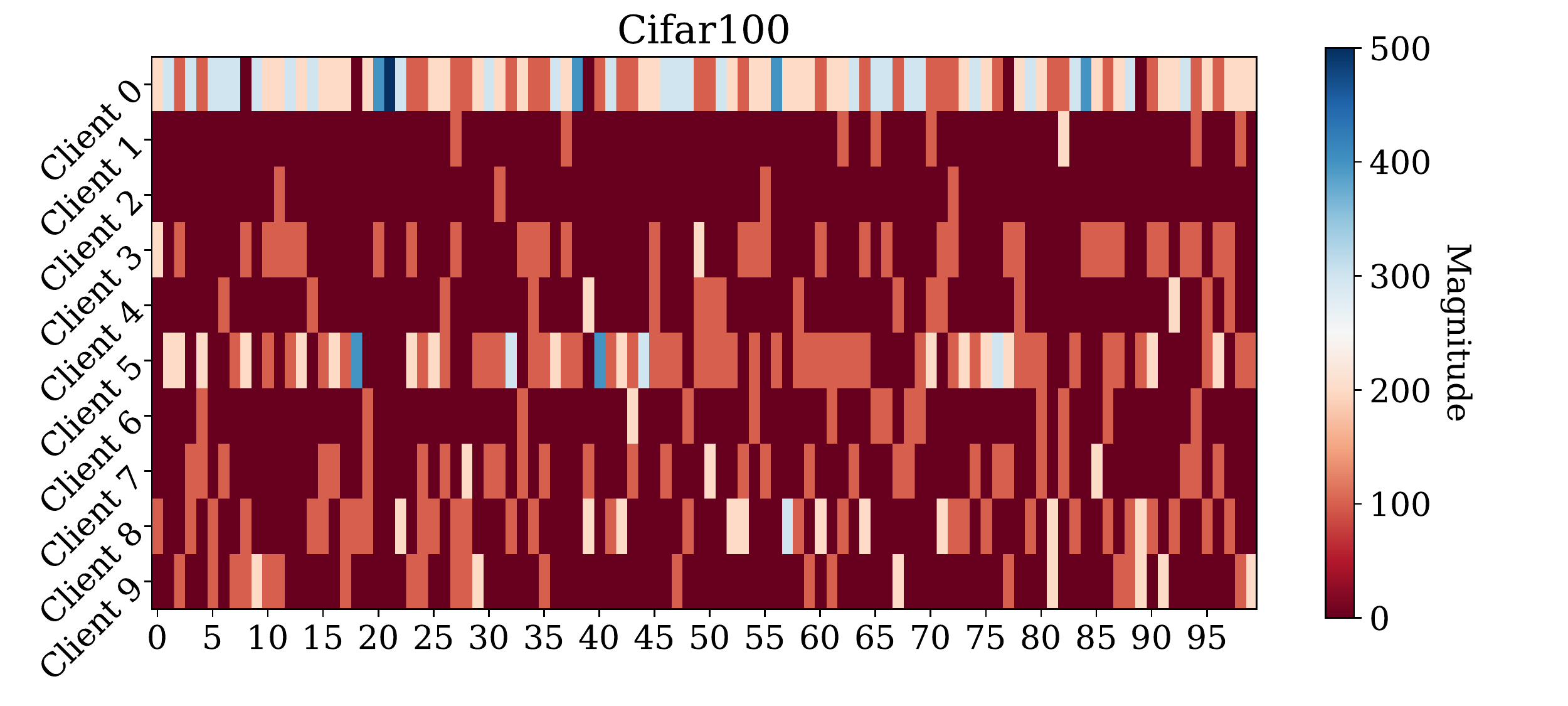}}
    \\
    \subfloat[]{%
       \includegraphics[width=0.5\linewidth]{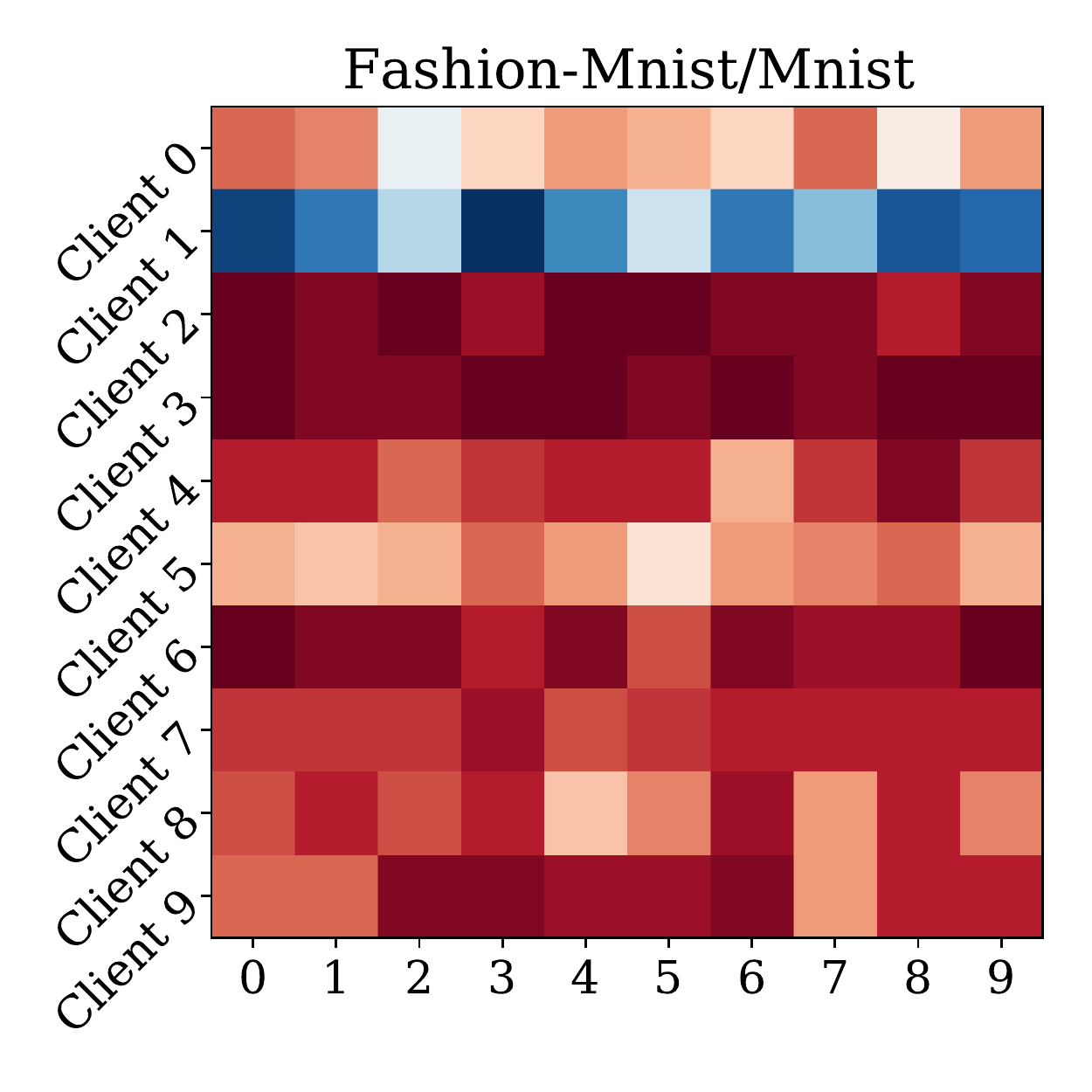}}
    \hfill
    \subfloat[]{%
        \includegraphics[width=0.5\linewidth]{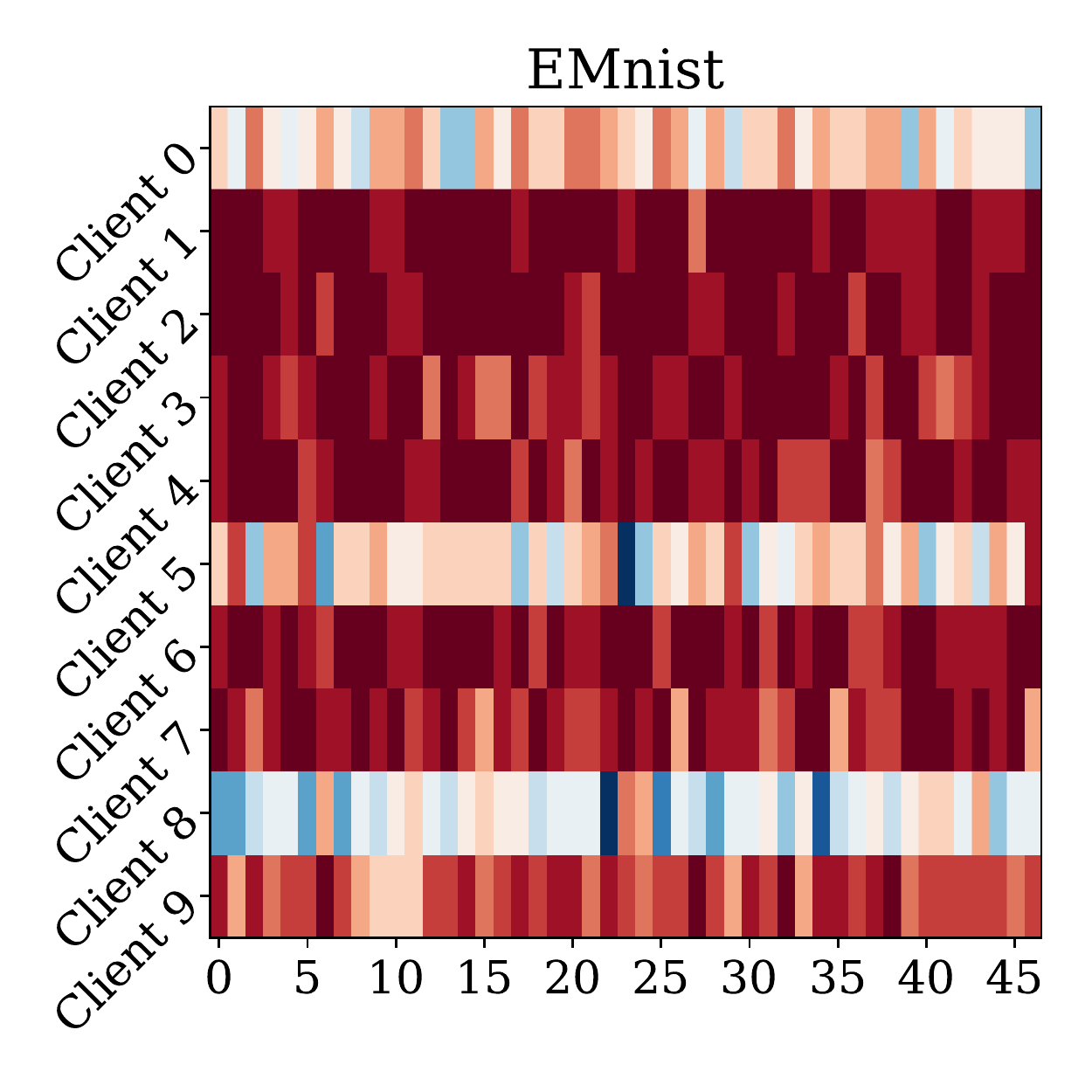}}
    \\
    \subfloat[]{%
        \includegraphics[width=0.5\linewidth]{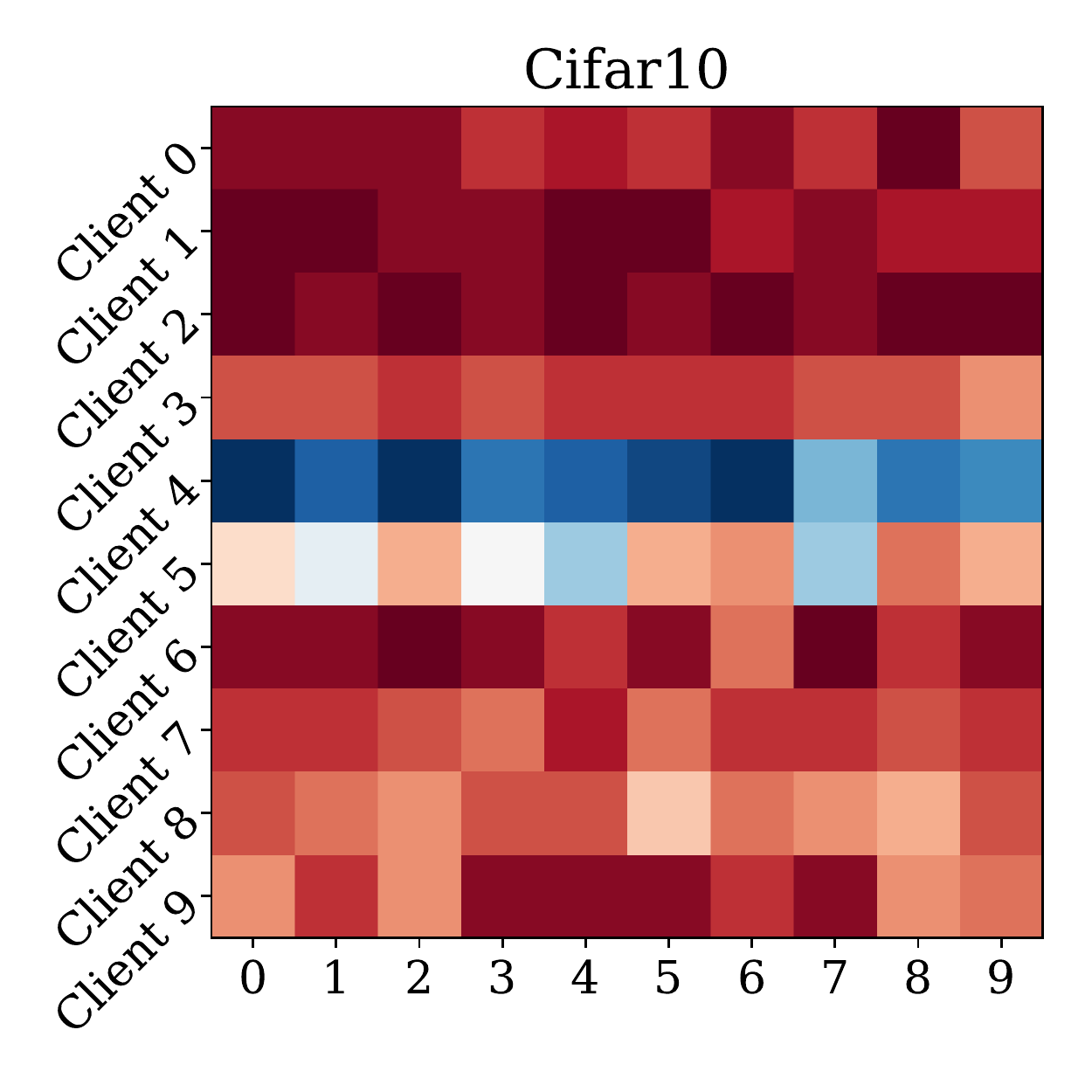}}
    \hfill
    \subfloat[]{%
        \includegraphics[width=0.5\linewidth]{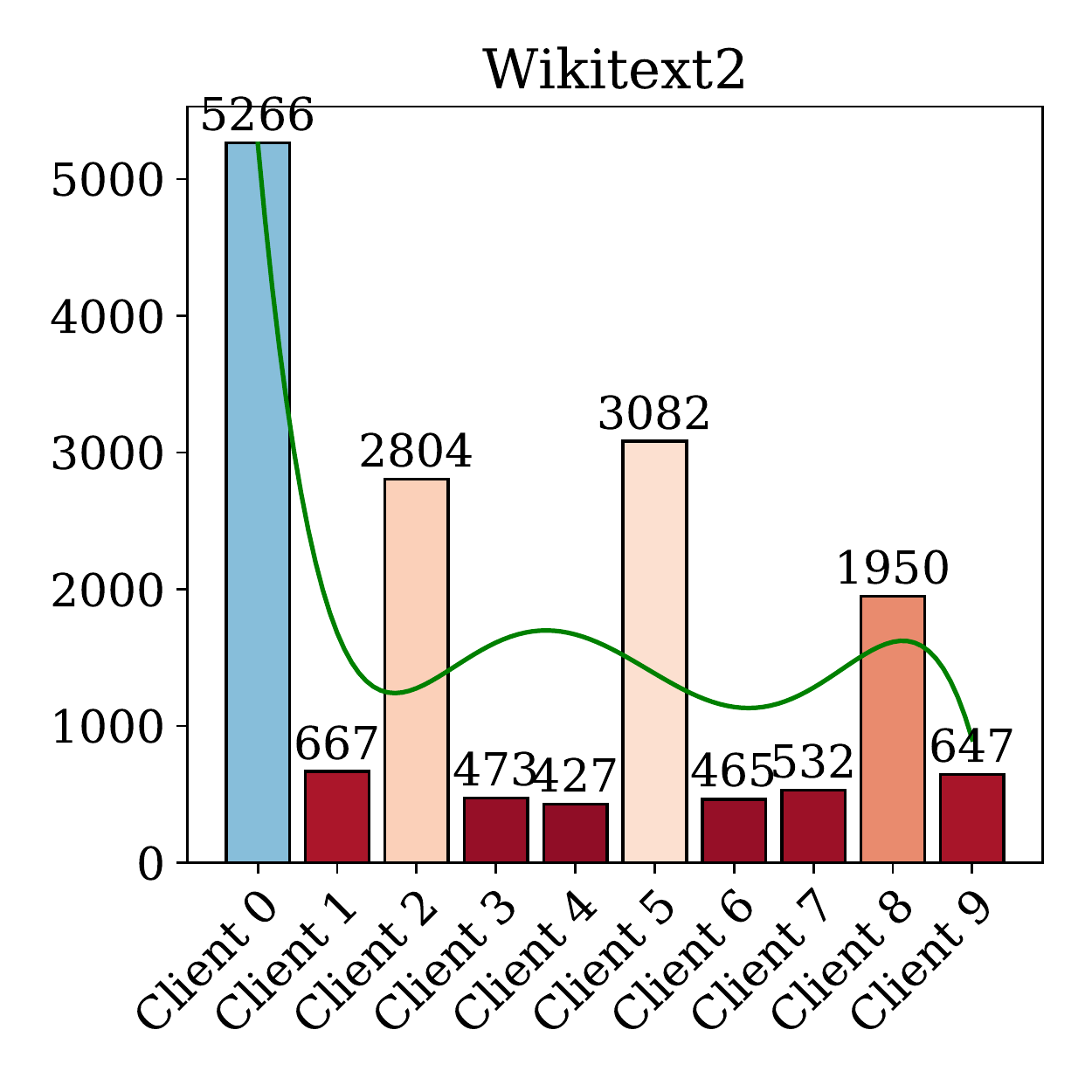}}
    \caption{Each sub-figure has two axes: a client axis and a label (class) axis, and the color of each cell or bar represents the magnitude of the assigned data. (a) The Non-IID partition of the Cifar100 dataset, which has in total 100 classes and 50000 training samples. (b) The Non-IID partition of the Fashion Mnist dataset and the Mnist dataset, which have in total 10 classes and 60000 training samples. (c) The Non-IID partition of the EMnist dataset, which has in total 47 classes and 112800 training samples. (d) The Non-IID partition of the Cifar10 dataset, which has in total 10 classes and 50000 training samples. (e) The Non-IID partition of the Wikitext2 dataset, which has over 16000 training samples.}
    \label{fig:dataset-distribution} 
\end{figure}

\begin{table*}[!b]
  \centering
  \captionsetup{justification=centering}
  \caption{\\\textsc{Accuracy comparison between vanilla FedAvg and Overlap-FedAvg with different $\lambda$ and fixed $\beta = 0.2$}}
    \begin{tabular}{cccccccc}
    \toprule
    \toprule
    \multirow{3}[4]{*}{Model} & \multirow{3}[4]{*}{dataset} & \multirow{3}[4]{*}{$\eta$} & FedAvg & \multicolumn{4}{c}{Overlap-FedAvg} \\
\cmidrule(lr){4-4} \cmidrule(lr){5-8}     &       &       & \multirow{2}[2]{*}{Accuracy / PPL} & $\lambda=0.0$ & $\lambda=0.2$ & $\lambda=0.5$ & $\lambda=0.8$ \\
          &       &       &       & Accuracy / PPL & Accuracy / PPL & Accuracy / PPL & Accuracy / PPL \\
    \midrule
    MLP   & Mnist & 0.001 & 0.9711 & 0.9776 & 0.9775 & 0.9775 & 0.9777 \\
    MnistNet & Fmnist & 0.001 & 0.9130 & 0.9146 & 0.9145 & 0.9153 & 0.9143 \\
    MnistNet & EMNist & 0.001 & 0.8661 & 0.8672 & 0.8676 & 0.8673 & 0.8676 \\
    CNNCifar & Cifar10 & 0.001 & 0.5088 & 0.4970 & 0.5060 & 0.5058 & 0.5058 \\
    VGG$^R$ & Cifar10 & 0.0001 & 0.4321 & 0.4272 & 0.4247 & 0.4248 & 0.4247 \\
    ResNet$^R$ & Cifar10 & 0.0001 & 0.4356 & 0.4341 & 0.4348 & 0.4345 & 0.4345 \\
    ResNet$^R$ & Cifar100 & 0.0001 & 0.0895 & 0.0865 & 0.0866 & 0.0866 & 0.0866 \\
    Transformer & Wikitext-2 & 0.0001 & 547.067 & 546.966 & 546.921 & 546.920 & 546.920 \\
    \midrule
    \multicolumn{8}{c}{$\beta=0.0$} \\
    \bottomrule
    \bottomrule
    \end{tabular}%
  \label{tab:accuracy-lambda}%
\end{table*}%

Comparing Theorem \ref{theorem:lemma1} with the Key Lemma $1$ originated in literature~\cite{li2019convergence}, several terms that Overlap-FedAvg introduced are strongly relevant to the learning rate $\eta_{t}$. Therefore, with the decaying of the learning rate during the federated training process on the IID or Non-IID dataset, these introduced terms will gradually approach nearly zero and eventually become negligible for the convergence of the learning.
    
\section{Experiments}
\label{section:five}

\subsection{Experimental setup}

We evaluated the effectiveness of the proposed Overlap-FedAvg by applying it to extensive models and datasets. For the base settings, the total number of the illustrative clients in our experiments was set to $10$, the max number of global epoch (a epoch usually contains thousands of iterations depending on the size of the dataset and the batch size) to train was set to $500$ and all experiments were done under a Non-IID environment in order to simulate the real-world scenario. Furthermore, the code of the experiments was developed with Pytorch and Pytorch Distributed RPC\footnote{https://pytorch.org/docs/stable/rpc.html} Framework with GLOO backend\footnote{https://github.com/facebookincubator/gloo} from scratch. NCCL backend was not used in our experiments due to its limited support for many edge devices without Nvidia GPU. To the best of our knowledge, Overlap-FedAvg is the first attempt to parallel the training phase and communication phase in the federated learning process, and it is totally capable of coping with other compression methods. As a result, in this work, we just compared the efficiency of Overlap-FedAvg with the vanilla FedAvg~\cite{mcmahan2017communication} (i.e., the metrics of FedAvg were treated as the baseline). Particularly, the following experiments shared the same experimental configurations.

\subsection{Models and Partition of Non-IID Datasets}

Several datasets were chosen to investigate the efficiency of the Overlap-FedAvg, which covered two typical application fields of deep learning. Specifically, Mnist~\cite{lecun2010mnist}, Fashion-Mnist\cite{xiao2017fashion}, EMnist\cite{cohen2017emnist}, Cifar10\cite{krizhevsky2009learning}, and Cifar100\cite{krizhevsky2009learning} were benchmark datasets in Image classification and Wikitext-2\cite{merity2016pointer} was the benchmark dataset in NLP. In order to meet the non-IID characteristic in real-world federated learning, we manually divided each of these datasets into ten Non-IID parts. The goal of this Non-IID partition was to make each part be different in both quantity and classes from other parts. The visualization of the partitioning is shown in Figure.~\ref{fig:dataset-distribution}. Some clients had plenty of training samples while some other clients only had a few, and in most clients, there are some classes that were insufficient or even absent. 

Moreover, there were six benchmark models unitized to train on these datasets: Multi-Layer Perceptron (MLP), MnistNet, CNNCifar, VGG$^R$, ResNet$^R$ and Transformer\cite{vaswani2017attention}. Among them, MLP and CNNCifar were also presented in the reference~\cite{mcmahan2017communication} to testify the effectiveness of the vanilla FedAvg, and here we utilized these two models to make a comparison between the proposed Overlap-FedAvg and the vanilla FedAvg. MnistNet, as a simplification from CNNCifar, was a standard model with only 2 convolutional layers and 2 fully connected layers without any pooling. Apart from that, the input of the MnistNet was also altered to process $1 \times 28 \times 28$ matrices instead of $3 \times 32 \times 32$ matrices. Similarly, VGG$^R$ here was also a simplified version of the original VGG11~\cite{simonyan2014very}, where all dropout layer and batch normalization layer were removed and the fully connected layers' size as well as the number of convolutional filters was reduced by half. This kind of simplification was also unitized in another state-of-art work~\cite{sattler2019robust}. Furthermore, the ResNet$^R$ model used in our experiments had its batch normalization layers removed. Besides, all other layers remained the same as the original ResNet18~\cite{he2016deep}. Finally, the basic configuration of the transformer model adopted from pytorch\footnote{\url{https://github.com/pytorch/examples/blob/master/word_language_model/model.py}} is summarized in Table \ref{tab:transformer-arch}.

\subsection{Comparison of accuracy}

\begin{figure}[h]
    \centering
    \subfloat[]{%
       \includegraphics[width=0.5\linewidth]{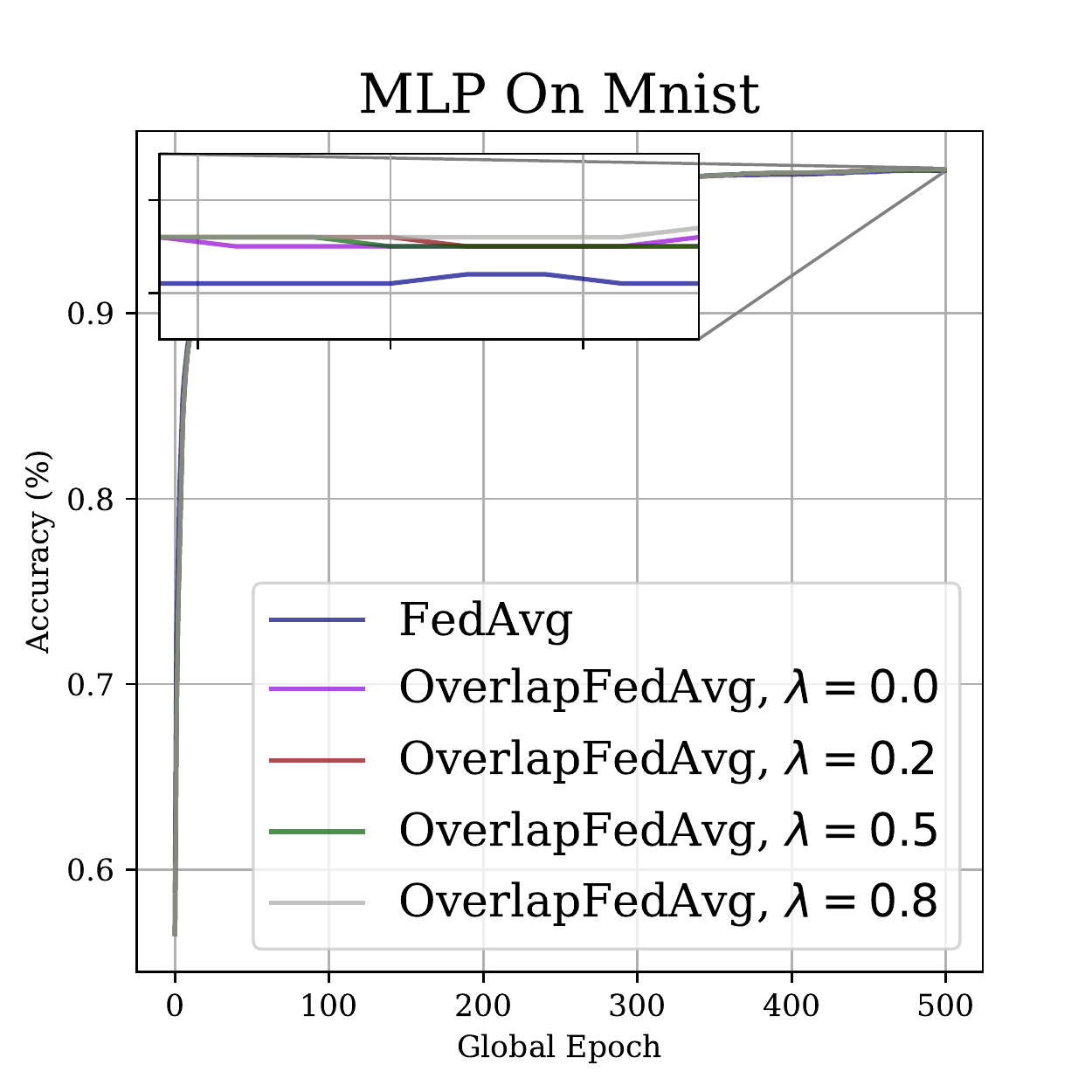}}
    \hfill
    \subfloat[]{%
        \includegraphics[width=0.5\linewidth]{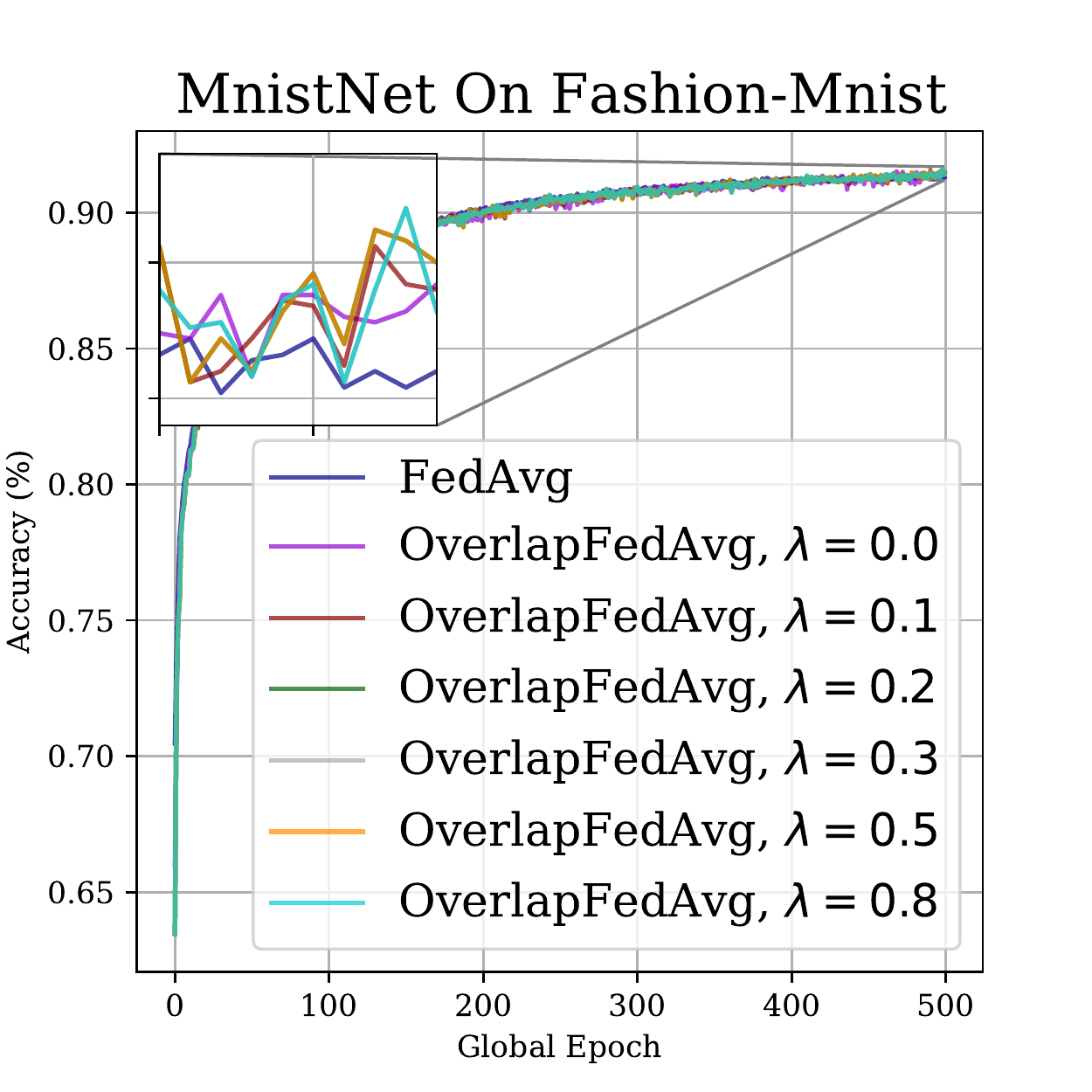}}
    \\
    \subfloat[]{%
        \includegraphics[width=0.5\linewidth]{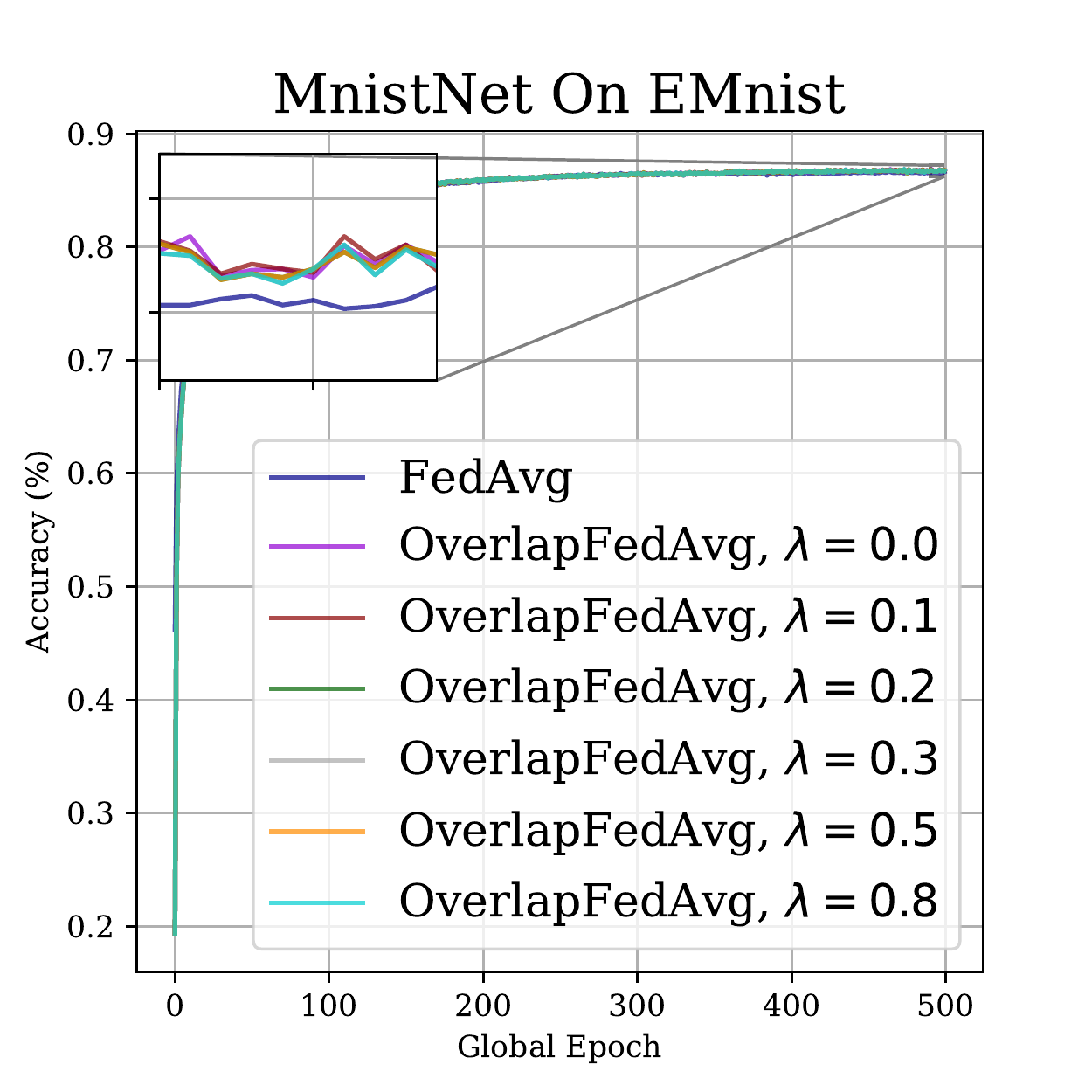}}
    \hfill
    \subfloat[]{%
        \includegraphics[width=0.5\linewidth]{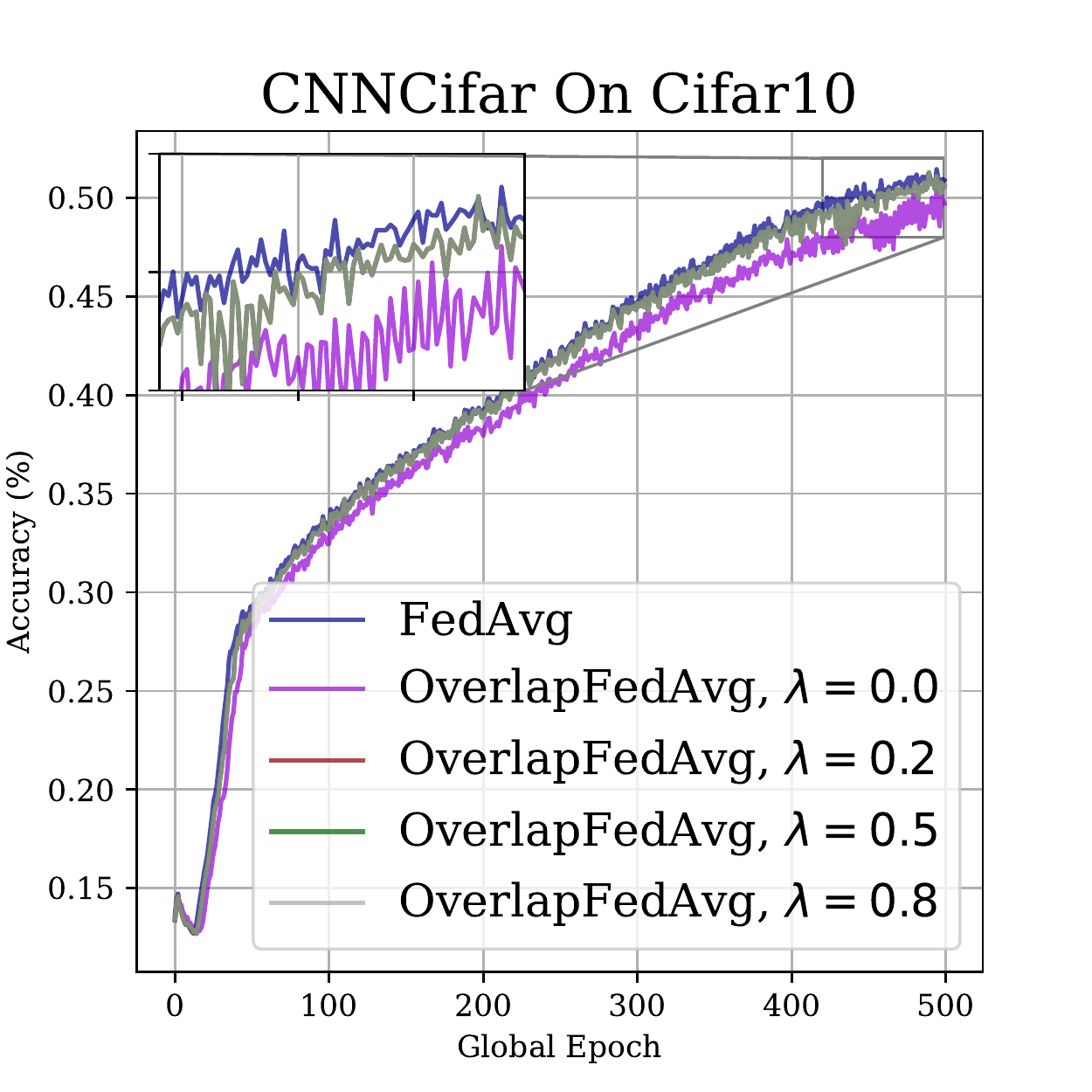}}
    \caption{(a) The accuracy curve of MLP trained on Mnist. (b) The accuracy curve of MnistNet trained on Fashion-Mnist. (c) The accuracy curve of MnistNet trained on EMnist. (d) The accuracy curve of CNNCifar trained on Cifar10.}
    \label{fig:acc-curve} 
\end{figure}

Firstly, we validated the efficacy of the hierarchical computing strategy and the data compensation mechanism in Overlap-FedAvg. Detailedly, we trained MLP on Mnist, MnistNet on Fasion-Mnist, MnistNet on EMnist and CNNCifar on Cifar10 with a learning rate $\eta = 0.001$. The number of the client iteration $E$ was set to a fixed constant $5$ for vanilla FedAvg, while the upper limit number of the client iteration was set to $5$ for Overlap-FedAvg. That is to say, in Overlap-FedAvg, although the communication interval is adaptive to the environment, it can not exceed $5$, which limits the worst performance of Overlap-FedAvg.

\begin{figure}[b]
	\centering
	\includegraphics[width=\linewidth]{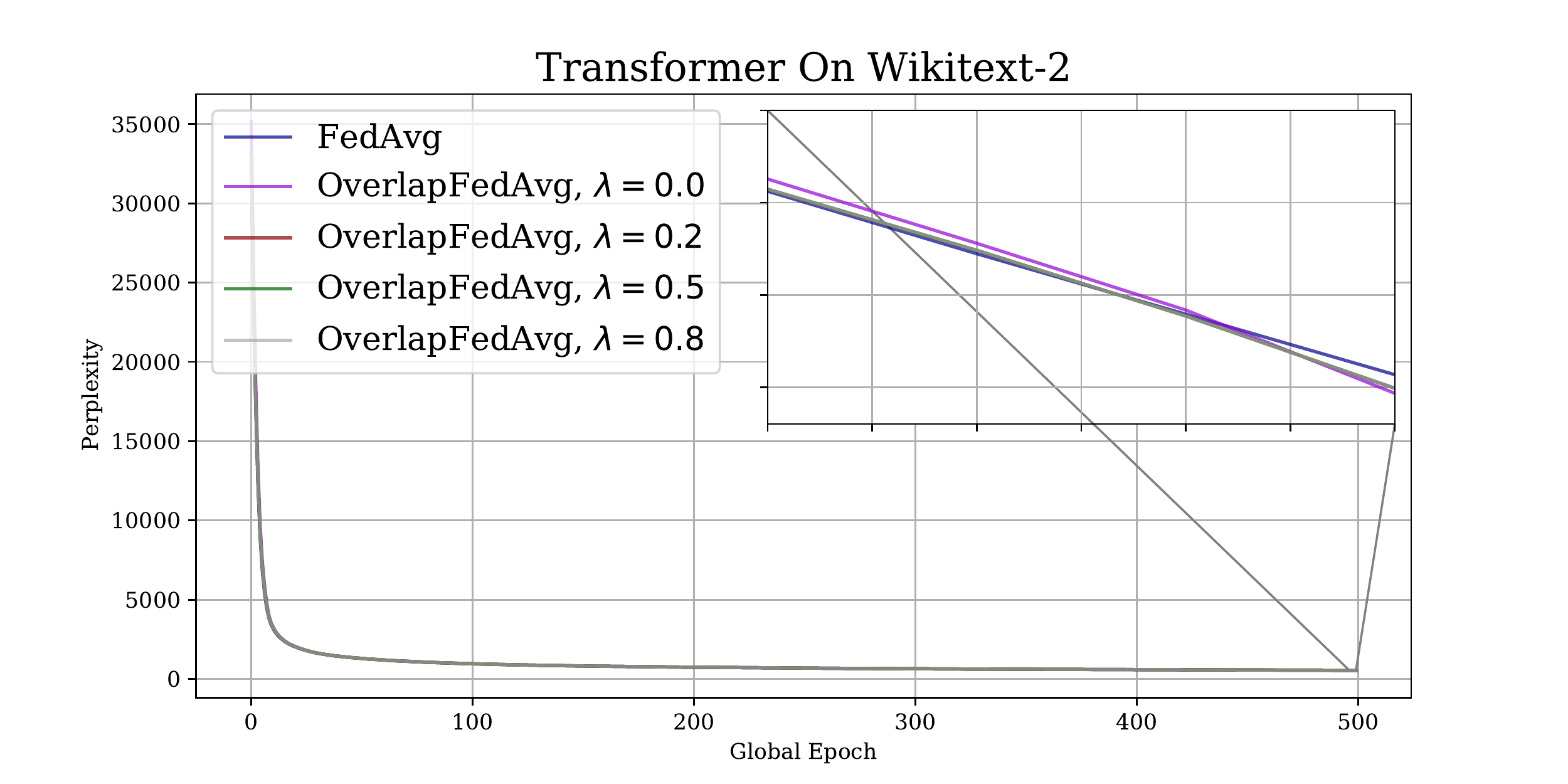}
	\caption{The perplexity curve of Transformer trained on Wikitext-2 dataset}
	\label{fig:trans-wikitext}
\end{figure}

\begin{figure}[h]
    \centering
    \subfloat[]{%
        \includegraphics[width=\linewidth]{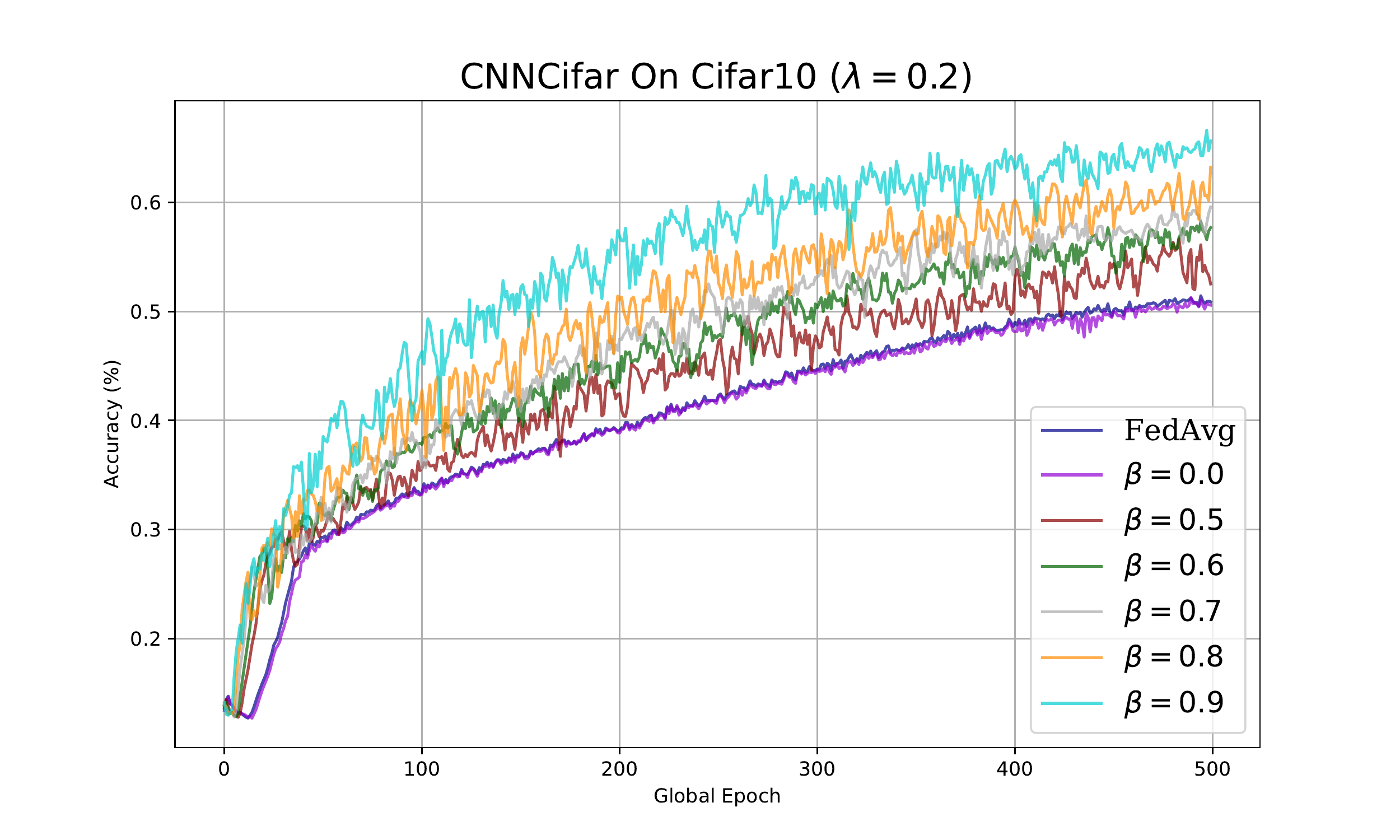}}
    \\
    \subfloat[]{%
        \includegraphics[width=\linewidth]{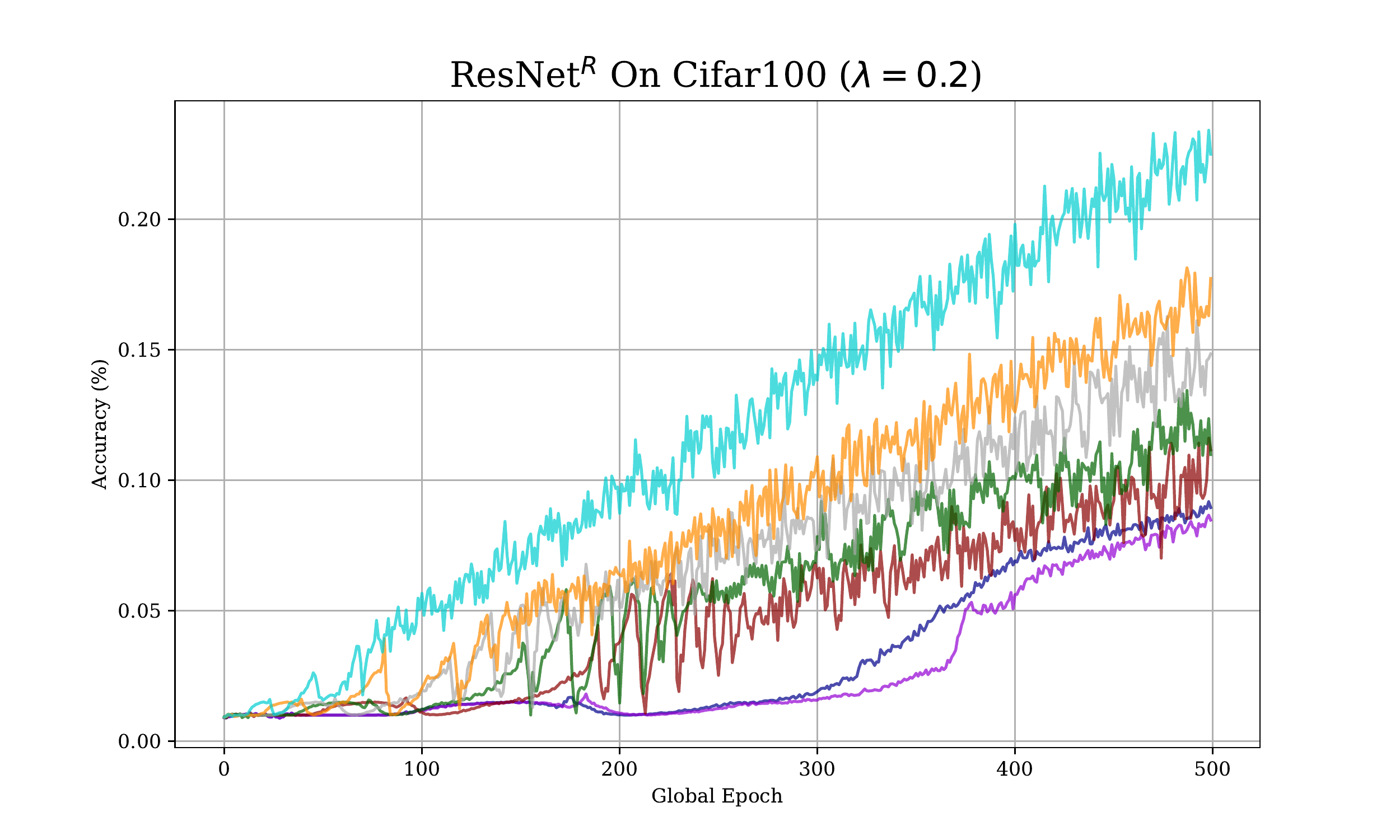}}
    \caption{(a) CNNCifar trained on Cifar10 and (b) ResNet$^R$ trained on Cifar100. For Overlap-FedAvg, $\lambda$ is fixed to $0.2$. It can be seen that the NAG implemented with compensated gradients drastically boosted the convergence speed of the federated learning.}
    \label{fig:cnncifar-mom} 
\end{figure}

\begin{table}[htb]
 \newcommand{\tabincell}[2]{\begin{tabular}{@{}#1@{}}#2\end{tabular}}
  \centering
  \captionsetup{justification=centering}
  \caption{\\\textsc{Accuracy comparison of overlap-fedavg with different $\beta$ and fixed $\lambda = 0.2$}}
    \begin{tabular}{cccccc}
    \toprule
    \toprule
    \multirow{2}[4]{*}{Model} & \multirow{2}[4]{*}{dataset} & \multirow{2}[4]{*}{$\eta$} & \multirow{2}[4]{*}{$\beta$} & \multicolumn{2}{c}{Accuracy/PPL} \\
\cmidrule{5-6}          &       &       &       & \multicolumn{1}{l}{\tabincell{c}{\textbf{50}\\{iterations}}} & \tabincell{c}{\textbf{500}\\{iterations}} \\
    \midrule
    MLP   & Mnist & 0.001 & 0.0     & 0.9317 & 0.9775 \\
    MLP   & Mnist & 0.001 & 0.1   & 0.9397 & 0.9793 \\
    MLP   & Mnist & 0.001 & 0.5   & 0.9536 & 0.9805 \\
    MLP   & Mnist & 0.001 & 0.8   & 0.9607 & 0.9781 \\
    \midrule
    MnistNet & FMNist & 0.001 & 0.0     & 0.8596 & 0.8676 \\
    MnistNet & FMNist & 0.001 & 0.5   & 0.8298 & 0.9145 \\
    \midrule
    MnistNet & Emnist & 0.001 & 0.0     & 0.8134 & 0.8676 \\
    MnistNet & Emnist & 0.001 & 0.1   & 0.7921 & 0.8651 \\
    MnistNet & Emnist & 0.001 & 0.5   & 0.8119 & 0.8647 \\
    \midrule
    CNNCifar & Cifar10 & 0.001 & 0.0     & 0.2890 & 0.5060 \\
    CNNCifar & Cifar10 & 0.001 & 0.5   & 0.2984 & 0.5252 \\
    CNNCifar & Cifar10 & 0.001 & 0.6   & 0.3088 & 0.5769 \\
    CNNCifar & Cifar10 & 0.001 & 0.7   & 0.3138 & 0.5958 \\
    CNNCifar & Cifar10 & 0.001 & 0.8   & 0.3234 & 0.6323 \\
    CNNCifar & Cifar10 & 0.001 & 0.9   & 0.3853 & 0.6564 \\
    \midrule
    VGG$^R$ & Cifar10 & 0.0001 & 0.0     & 0.1003 & 0.4247 \\
    VGG$^R$ & Cifar10 & 0.0001 & 0.5   & 0.1364 & 0.4615 \\
    \midrule
    ResNet$^R$ & Cifar10 & 0.0001 & 0.0     & 0.1382 & 0.4345 \\
    ResNet$^R$ & Cifar10 & 0.0001 & 0.5   & 0.1483 & 0.4388 \\
    ResNet$^R$ & Cifar10 & 0.0001 & 0.9   & 0.2252 & 0.5164 \\
    \midrule
    ResNet$^R$ & Cifar100 & 0.0001 & 0.0     & 0.0100 & 0.0866 \\
    ResNet$^R$ & Cifar100 & 0.0001 & 0.5   & 0.0121 & 0.1116 \\
    ResNet$^R$ & Cifar100 & 0.0001 & 0.6   & 0.0141 & 0.1354 \\
    ResNet$^R$ & Cifar100 & 0.0001 & 0.7   & 0.0150 & 0.1611 \\
    ResNet$^R$ & Cifar100 & 0.0001 & 0.8   & 0.0161 & 0.1814 \\
    ResNet$^R$ & Cifar100 & 0.0001 & 0.9   & 0.0262 & 0.2341 \\
    \midrule
    Transformer & Wikitext-2 & 0.001 & 0.0     & 1292.328 & 546.921 \\
    Transformer & Wikitext-2 & 0.001 & 0.1   & 1234.023 & 529.433 \\
    Transformer & Wikitext-2 & 0.001 & 0.5   & 1110.573 & 484.967 \\
    Transformer & Wikitext-2 & 0.001 & 0.9   & 1177.568 & 385.708 \\
    \midrule
    \multicolumn{6}{c}{$\lambda=0.2$} \\
    \bottomrule
    \bottomrule
    \end{tabular}%
  \label{tab:accuracy-beta}%
\end{table}%

The accuracy curve of these experiments is shown in Figure \ref{fig:acc-curve}. As we can see, when setting $\lambda = 0$ (i.e. do not compensate the stale data at all), the final accuracy of the Overlap-FedAvg was already surpassed vanilla FedAvg in some lightweight experiments (i.e. experiments with lightweight models or datasets). This phenomenon was caused by the lower synchronization interval in Overlap-FedAvg. Namely, these models were all relatively small, and when transmitting such a small model, the communication interval of $5$ is large, and therefore Overlap-FedAvg adaptively lowered the communication interval to a suitable value that fitted the current network and hardware condition, which proved the adaptability of the hierarchical computing strategy. Moreover, although the communication interval was automatically reconfigured to a more appropriate value, the model parameters staleness problem still existed. Therefore, we utilized the compensation method to alleviate this problem, and the introduced hyper-parameter $\lambda$ was used to adjust the compensation. As shown in Figure.~\ref{fig:acc-curve}, it is clear that a model with higher final accuracy can be obtained by fine-tuning $\lambda$.

Moreover, we expanded our experiments to some large models with complex datasets. Specifically, the VGG$^R$ was trained on Cifar10, and ResNet$^R$ was trained on Cifar10, Cifar100 with $\eta=0.0001$, respectively. The results of all the experiments were described in Table~\ref{tab:accuracy-lambda}. With the larger model size, the communication interval can not be decreased by Overlap-FedAvg for higher communication efficiency as well as better final accuracy. However, the time originally used for model uploading and model downloading can at least be saved by Overlap-FedAvg, which could be still a large amount in some cases. From the Table, we can see that when the $\lambda$ was configured to $0$, unlike previous lightweight experiments, Overlap-FedAvg had a lower final accuracy compared to vanilla FedAvg due to the stale model parameters problem. However, by adjusting the $\lambda$ to properly compensate the data, the gap between Overlap-FedAvg and vanilla FedAvg can be greatly reduced.

Furthermore, we validated the Overlap-FedAvg's feasibility by applying it to gradients sensitive Natural Language Processing (NLP) tasks. Specifically, we utilized transformer to be trained on the wikitext-2 dataset, and used perplexity to denote the model's performance (the lower the better), as drawn in Figure \ref{fig:trans-wikitext}. With the additional assistance of the hierarchical computing strategy and the data compensation mechanism, Overlap-FedAvg with the stale problem was also capable of achieving nearly the same convergence speed with respect to the global iteration, and reaching a very similar final accuracy compared to the vanilla FedAvg.

Then, to evaluate the effectiveness of the NAG algorithm implemented in Overlap-FedAvg, we fixed $\lambda=0.2$ and trained several DNN models with different $\beta$, which is illustrated in Figure \ref{fig:cnncifar-mom} and Table \ref{tab:accuracy-beta}. From the Figure \ref{fig:cnncifar-mom}, we can see that when the NAG was enabled, Overlap-FedAvg significantly outperformed the convergence speed of vanilla FedAvg, which strongly verified the usability of this acceleration. Consequently, considering the implementation of this acceleration was so intuitive and easy to implement, many other acceleration methods originated in SGD, be it Adam\cite{kingma2014adam} or RMSProp\cite{Tieleman2012}, should be applied to Overlap-FedAvg with minimal effects.

\subsection{Comparison of Training Speed}

In this section, we compared the training speed of the Overlap-FedAvg and the vanilla FedAvg to demonstrate the communication efficiency of the Overlap-FedAvg framework. Specifically, the average wall-clock time for one iteration of federated learning in all above-mentioned experiments is presented in Figure~\ref{fig:wall-clock-compare} and Table~\ref{tab:average-train-time}.

From the Figure~\ref{fig:wall-clock-compare}, it can be seen that in all experiments, Overlap-FedAvg successfully lowered the wall-clock time for each iteration training. Specifically, when training lightweight models (e.g. MLP, MnistNet or CNNCifar), compared to vanilla FedAvg, Overlap-FedAvg approximately saveed 10\% of the training time. When it came to train some slightly heavier models (e.g. VGG$^R$, ResNet$^R$, Transformer), Overlap-FedAvg significantly boosted the training process by at most 40\%. Consequently, since the size of the parameters from MLP to Transformer is increasing, which is documented in Table \ref{tab:average-train-time}, we concluded that Overlap-FedAvg saveed more training time with heavier models. 

\begin{figure}[htp]
	\centering
	\includegraphics[width=\linewidth]{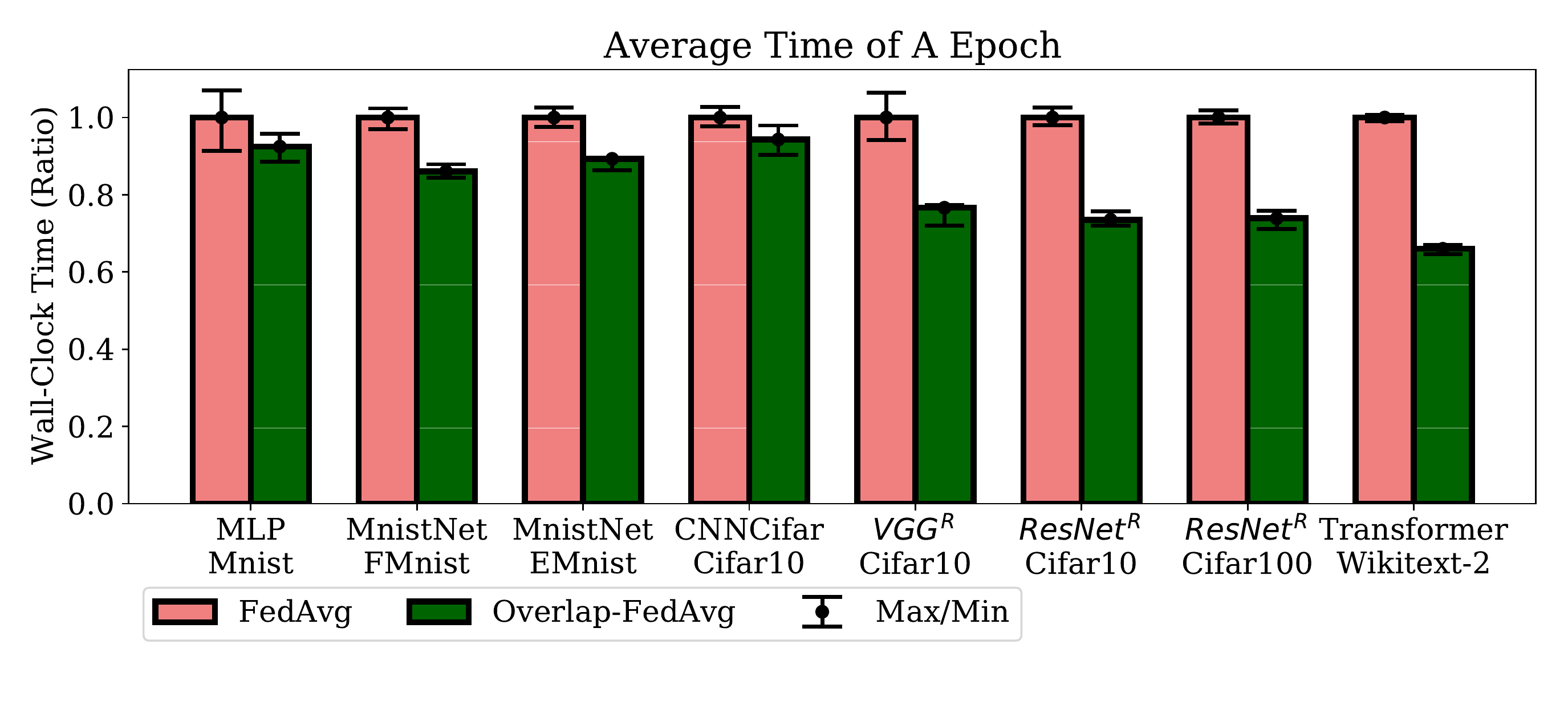}
	\caption{The average wall-clock time for one iteration of federated learning in MLP on Mnist, MnistNet on FMnist, MnistNet on EMnist, CNNCifar on Cifar10, VGG$^R$ on Cifar10, ResNet$^R$ on Cifar10, ResNEt$^R$ on Cifar100 and Transformer on Wikitext-2. We can see that the Overlap-FedAvg saved more time with the increase of the model size.}
	\label{fig:wall-clock-compare}
\end{figure}

\begin{figure}[htb]
	\centering
	\includegraphics[width=\linewidth]{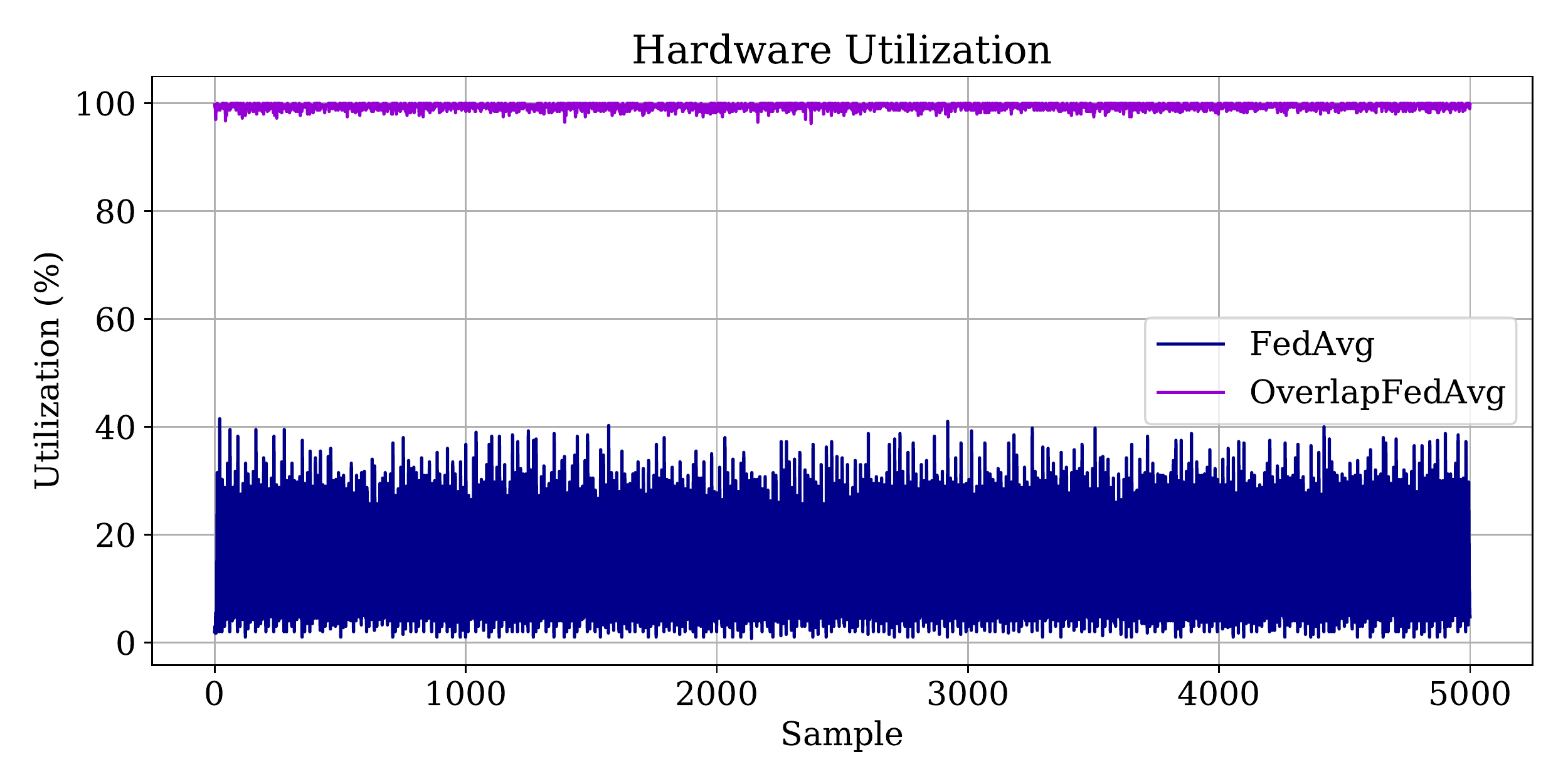}
	\caption{The hardware utilization rate during the federated learning process. Overlap-FedAvg had a much higher utilization rate compared to vanilla FedAvg.}
	\label{fig:util-compare}
\end{figure}

We also compared the efficiency of the proposed Overlap-FedAvg from the hardware's perspective. Specifically, we recorded the utilization rate of the computation during the federated learning process, and made a visualization of it after the learning, which is shown in Figure \ref{fig:util-compare}. It can be seen that Overlap-FedAvg achieveed a much higher hardware utilization rate compared to FedAvg due to the continuous local training strategy. In fact, Overlap-FedAvg almost made the best use of the computational resource as the hardware utilization rate of the Overlap-FedAvg was nearly fixed at 100\%, which further validates the effectiveness of Overlap-FedAvg.

\begin{table}[htbp]
  \centering
  \captionsetup{justification=centering}
  \caption{\\\textsc{The average wall-clock time for one iteration of federated learning}}
    \begin{tabular}{ccccc}
    \toprule
    \toprule
    \multirow{2}[4]{*}{Model} & \multirow{2}[4]{*}{dataset} & \multirow{2}[4]{*}{Parameters} & \multicolumn{2}{c}{Time / Iteration (Second)} \\
\cmidrule{4-5}          &       &       & FedAvg & Overlap-FedAvg \\
    \midrule
    MLP   & Mnist & 199,210 & 31.2  & \textbf{28.85}~($\downarrow$7.53\%) \\
    MnistNet & FMnist & 1,199,882 & 32.96 & \textbf{28.31}~($\downarrow$14.11\%) \\
    MnistNet & Emnist & 1,199,882 & 47.19 & \textbf{42.15}~($\downarrow$10.68\%) \\
    CNNCifar & Cifar10 & 878,538 & 48.07 & \textbf{45.33}~($\downarrow$5.70\%) \\
    VGG$^R$ & Cifar10 & 2,440,394 & 64.4  & \textbf{49.33}~($\downarrow$23.40\%) \\
    ResNet$^R$ & Cifar10 & 11,169,162 & 156.88 & \textbf{115.31}~($\downarrow$26.50\%) \\
     ResNet$^R$ & Cifar100 & 11,169,162 & 156.02 & \textbf{115.3}~($\downarrow$26.10\%) \\
    Transformer & Wikitext-2 & 13,828,478 & 133.19 & \textbf{87.9}~($\downarrow$34.0\%) \\
    \bottomrule
    \bottomrule
    \end{tabular}%
  \label{tab:average-train-time}%
\end{table}%

\section{Discussion}
\label{section:six}
\subsection{Privacy Security of OverLep-FedAvg}

The main target for federated learning is to train the DNN model without revealing any local data to others, including the central server who coordinates the entire training process. However, some researchers played the role of attackers and tried to find ways to gain private information from the intermediate data (e.g. model weights, gradients.) generated during the training process, which was regarded as desensitized. Recently, some work~\cite{zhu2019deep, zhao2020idlg} pointed out that gradients can be utilized to inversely generate the training samples, which to some extent proved false of the previous idea that gradients are not sensitive data in deep learning. In other words, the transmission of the exposed weight in federated learning may result in the privacy leakage of the decentralized clients~\cite{2020arXiv200302133L}.

In vanilla FedAvg, the learning rate of each client was configured by the central server, and the local DNN weights trained by various clients were accumulated to compute the global weights on the central server. Therefore, if the central server is "curious", it is totally feasible for it to compute the clients' gradients from their weights with the known learning rate, and then extracts the hidden training data with the method proposed in references~\cite{zhu2019deep, zhao2020idlg}, which is no doubt detrimental to privacy security of the federated learning. Fortunately, since the learning rate is not really involved when updating the global model in the vanilla FedAvg, this issue can be resolved by letting clients generate their own learning rate at the beginning of the training, since currently the data restoring techniques in \cite{zhu2019deep, zhao2020idlg} was basically faking a input and label, passing them into the model, getting a fake gradients, utilizing back-propagation to calculate the loss between the real gradients from the clients and the fake gradients, and then using SGD to optimize the input and label (i.e. $\mathcal{L}(\cdot) = \left|\left| \nabla F_{fake} - \nabla F_{real} \right|\right|$). Thus, by adding a unknown learning rate, the SGD has to optimize two multiplied parameters (i.e. $\mathcal{L}(\cdot) = \left|\left| \eta \nabla F_{fake} - w_{real} \right|\right|$), which is theoretically impossible to optimize. In fact, we think it is very crucial to make sure the clients' learning rate is not known to the central server. Therefore, as the Overlap-FedAvg needs gradients to calculate the compensation, a question emerges: will the Overlap-FedAvg work safely without the central server knowing the clients' learning rate?

Yes, it will. Overlap-FedAvg indeed needs gradients, but it does not need the \textit{real} gradients since the learning rate is a constant that can be extracted. In other words, no matter what the real learning rate is in clients, the central server can just assume it is $1.0$, computing scaled gradients and utilizing the scaled gradients to calculate the compensation. Formally, presuming the clients' learning rate is $\eta_c$, and therefore the scaled gradients $\nabla F(w_{t-1})_{scaled} = \eta_c \nabla F(w_{t-1})_{real}$. When compensating, following the algorithm we can derive the compensated gradients as Equation~\ref{compensated_gradient}:

\begin{align}\label{compensated_gradient}
\begin{split}
    &\nabla F(w_{t-1})_{scaled}^{ah} = \nabla F(w_{t-1})_{scaled}^{ah} \\
    &\qquad + \lambda \nabla F(w_{t-1})_{scaled}^{ah} \odot \nabla F(w_{t-1})_{scaled}^{ah} \odot (w_{t} - w_{t-1}) \\
    &= \eta_c \nabla F(w_{t-1})_{real} \\
    &\qquad + \lambda \eta_c^2 \nabla F(w_{t-1})_{real} \odot \nabla F(w_{t-1})_{real} \odot (w_{t} - w_{t-1}) \\
    &= \eta_c \nabla F(w_{t-1})_{real}^{ah},
\end{split}
\end{align}

which suggests that the compensated scaled gradients is essentially equivalent to the compensated real gradients multiplied by the unknown clients' learning rate $\eta_c$, and can be directly brought into the SGD updating rule to update the global model. In summary, the privacy security of Overlap-FedAvg is guaranteed without leaking the learning rate to the central server. 

\section{Conclusions and future work}
Learning from massive data stored in different locations is essential in many scenarios. To efficiently accomplish the learning, federated learning was introduced to provide a practical and privacy-preserving approach for DNN training. However, federated learning introduces massive communication overhead resulted from the heavy communication of DNN weights and low bandwidth.

In this paper, we proposed a novel federated learning framework, named Overlap-FedAvg, to achieve communication-efficient federated learning from the structural perspective. Comprehensively, Overlap-FedAvg firstly parallels the training phase with communication phase to allow continuous local model training without any blocking caused by the frequent communicating. Then, Overlap-FedAvg introduces a data compensation mechanism to resolve the stale data problem brought by the parallelism, and ensures the same convergence rate in comparison with the vanilla FedAvg under the same circumstance. Finally, an intuitive and easy-to-implement NAG algorithm is described, which could be coped with many other federated learning algorithms. A theoretical analysis was provided to guarantee the convergence of the Overlap-FedAvg. Extensive experiments also demonstrated that Overlap-FedAvg with only parallelism and data compensation is already capable to considerably speed up the federated learning process while maintaining nearly the same model performance (i.e. final accuracy) compared to FedAvg. Furthermore, with the NAG enabled, Overlap-FedAvg massively surpassed the vanilla FedAvg in accuracy metric concerning both the number of training iteration and the wall-clock time, which strongly validate the feasibility and effectiveness of our proposed method, especially when the model is relatively big and the network bandwidth of clients is slow or unstable.

Despite the good efficacy of Overlap-FedAvg, it inevitably introduced a new hyper-parameter $\lambda$ in the data compensation mechanism to properly estimate the Hessian matrix of model weights by controlling the variance between estimated values and real values. Consequently, a comprehensive sampling of the environment is required to obtain an optimal $\lambda$, which is costly for both time and computational resources. In future work, we would like to investigate methods capable of adaptively selecting $\lambda$ by evolutionary algorithms or reinforcement learning to further improve the practicability of the Overlap-FedAvg.


%





\ifCLASSOPTIONcaptionsoff
  \newpage
\fi



\bibliographystyle{IEEEtran}

\begin{IEEEbiography}[{\includegraphics[width=1in,height=1.25in,clip,keepaspectratio]{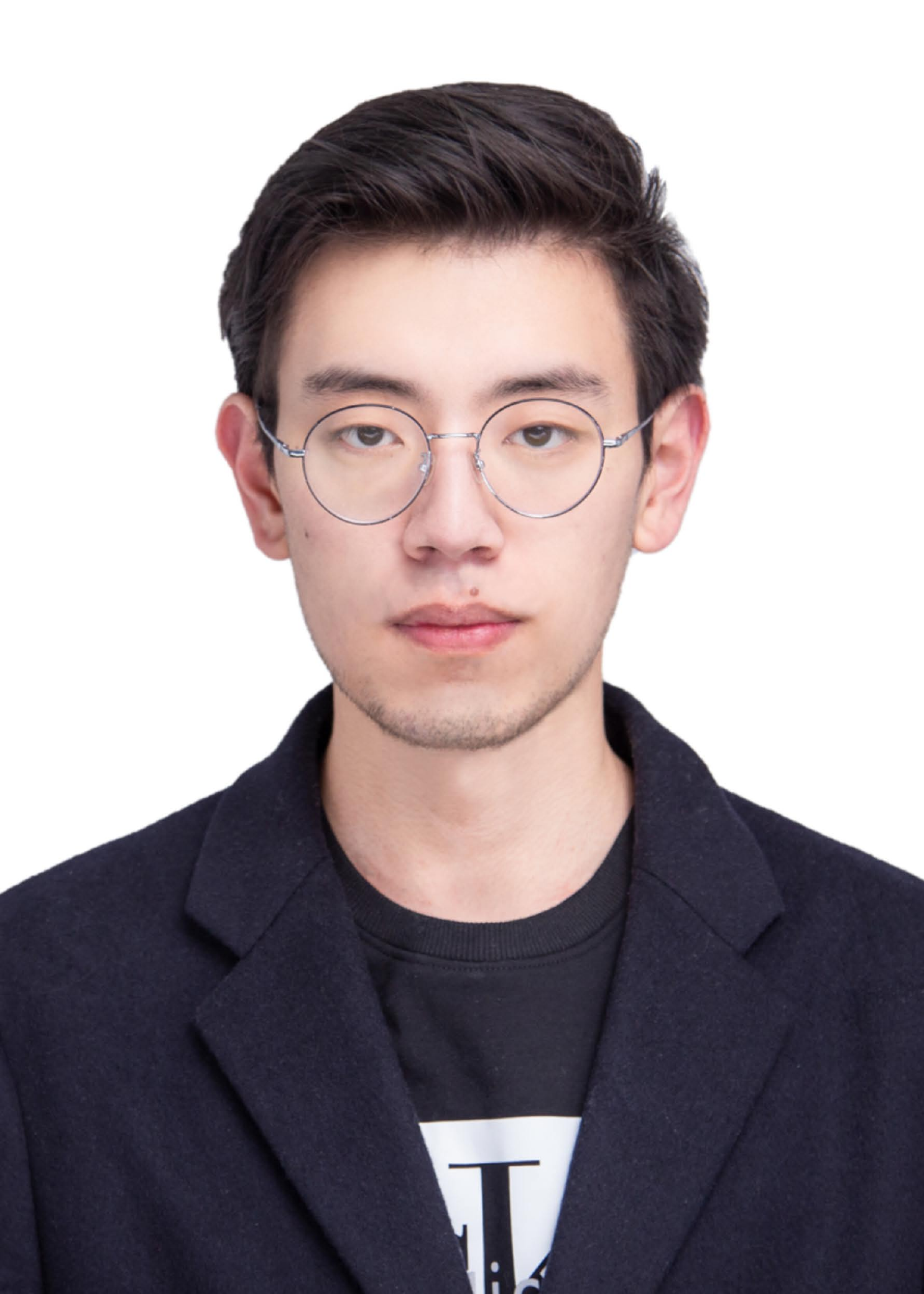}}]{Yuhao Zhou} is currently a senior student studying Computer Science in the College of Computer Science, Sichuan University, China. He has received his research training under Jiancheng Lv instructions since 2019. His main research interests include distributed machine learning, federated learning and optimization.
\end{IEEEbiography}

\begin{IEEEbiography}[{\includegraphics[width=1in,height=1.25in,clip,keepaspectratio]{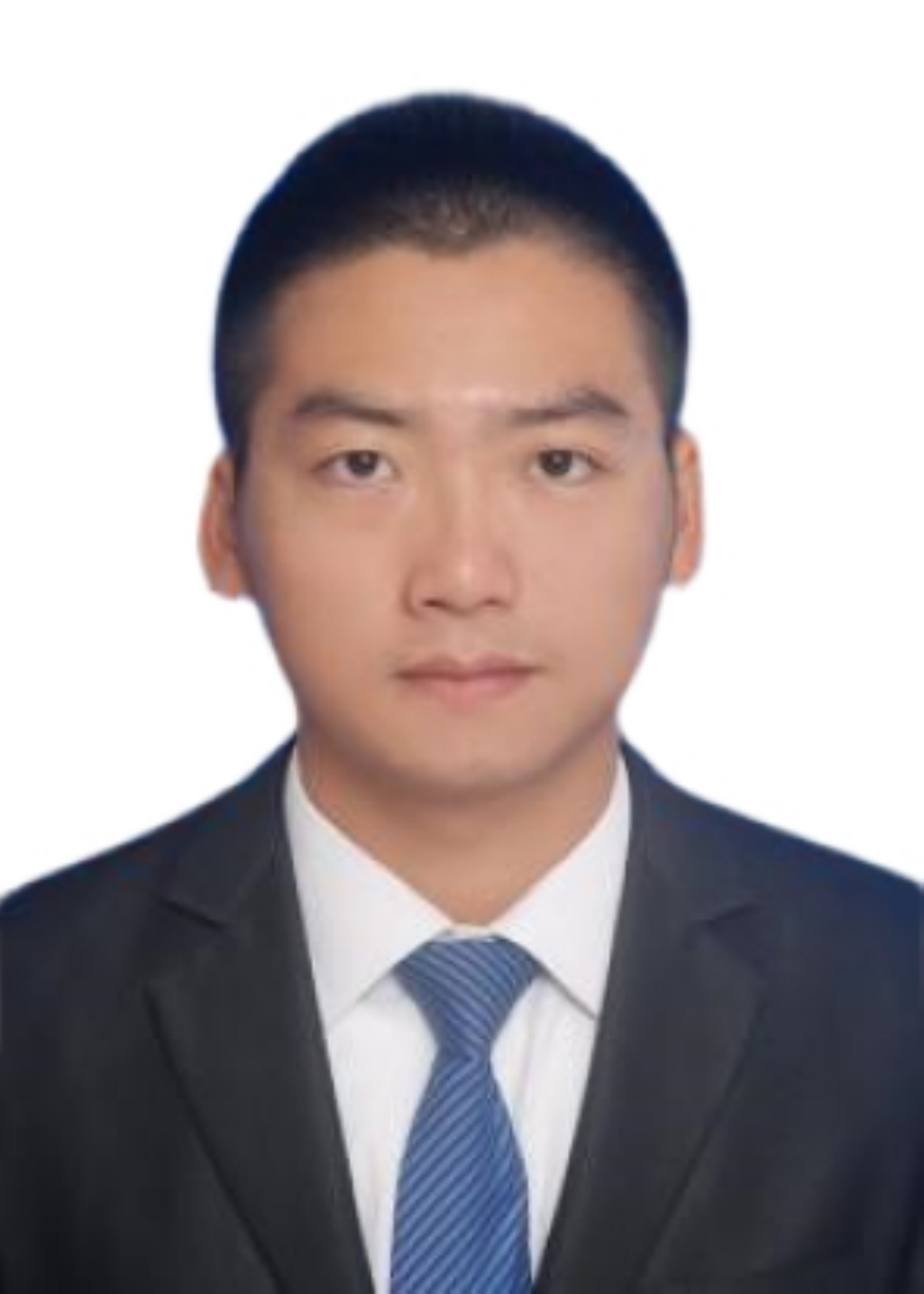}}]{Qing Ye} received the BS and MS degree from the School of Computing, Sichuan University, China. He is currently working toward the PhD degree with the School of Computing, Sichuan University. His main research interests include distributed machine learning, deep learning and neural architecture search.
\end{IEEEbiography}

\begin{IEEEbiography}[{\includegraphics[width=1in,height=1.25in,clip,keepaspectratio]{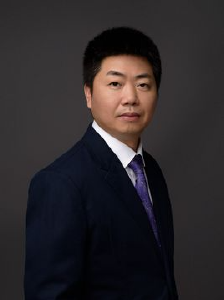}}]{Jiancheng Lv} (M’09) received the Ph.D. degree in computer science and engineering from the University of Electronic Science and Technology of China, Chengdu, China, in 2006.
 
He was a Research Fellow with the Department of Electrical and Computer Engineering, National University of Singapore, Singapore. He is currently a Professor with the Data Intelligence and Computing Art Laboratory, College of Computer Science, Sichuan University, Chengdu, China. His research interests include neural networks, machine learning, and big data.
\end{IEEEbiography}


\vfill


\end{document}